\DeclareMathOperator{\relu}{ReLU}
\theoremstyle{plain}
\newtheorem{theorem}{Theorem}
\newtheorem{proposition}{Proposition}
\newtheorem{lemma}{Lemma}
\theoremstyle{definition}
\newtheorem{definition}{Definition}
\newtheorem{example}{Example}
\newtheorem{assumption}{Assumption}
\theoremstyle{remark}
\newtheorem{remark}{Remark}
\theoremstyle{plain}
\newtheorem*{rep@theorem}{\rep@title}
\newcommand{\newreptheorem}[2]{%
\newenvironment{rep#1}[1]{%
 \def\rep@title{#2 \ref{##1}}%
 \begin{rep@theorem}}%
 {\end{rep@theorem}}}
\begin{document}

%%%%% FRONT MATTER

\title{Data-Driven Certification of Neural Networks \\
with Random Input Noise}
\author{Brendon G.\ Anderson\thanks{B.\ G.\ Anderson is with the Department of Mechanical Engineering, University of California, Berkeley (email: \href{mailto:bganderson@berkeley.edu}{bganderson@berkeley.edu}).} \and Somayeh Sojoudi\thanks{S.\ Sojoudi is with the Department of Electrical Engineering and Computers Sciences and the Department of Mechanical Engineering, University of California, Berkeley (email: \href{mailto:sojoudi@berkeley.edu}{sojoudi@berkeley.edu}).}
}
\date{}

\maketitle

\begin{abstract}
	Methods to certify the robustness of neural networks in the presence of input uncertainty are vital in safety-critical settings. Most certification methods in the literature are designed for adversarial or worst-case inputs, but researchers have recently shown a need for methods that consider random input noise. In this paper, we examine the setting where inputs are subject to random noise coming from an arbitrary probability distribution. We propose a robustness certification method that lower-bounds the probability that network outputs are safe. This bound is cast as a chance-constrained optimization problem, which is then reformulated using input-output samples to make the optimization constraints tractable. We develop sufficient conditions for the resulting optimization to be convex, as well as on the number of samples needed to make the robustness bound hold with overwhelming probability. We show for a special case that the proposed optimization reduces to an intuitive closed-form solution. Case studies on synthetic, MNIST, and CIFAR-10 networks experimentally demonstrate that this method is able to certify robustness against various input noise regimes over larger uncertainty regions than prior state-of-the-art techniques.
\end{abstract}

%%%%% BODY

% Introduction.

\section{Introduction}
\label{sec: introduction}
Real-world data is inherently uncertain. Such uncertainty comes in a variety of forms, including random measurement noise, adversarial attacks, and even structural perturbations in the underlying graph topology of networked systems \cite{franceschi2018robustness,jin2020boundary,geisler2020reliable}. Despite their excellent performance in a variety of decision and control tasks, e.g., distributed control using graph neural networks \cite{gama2021graph}, researchers have found that neural networks are highly sensitive to uncertainties in their inputs \cite{szegedy2014intriguing,fawzi2016robustness,su2019one}. This sensitive behavior is intolerable when using neural networks to operate safety-critical control systems, such as the power grid \cite{kong2017short}. As a result, a large emphasis has been placed by researchers on the development of methods that certify the robustness properties of neural networks.

Much of the literature on robustness certification has revolved around adversarial inputs, i.e., inputs with small-magnitude perturbations that are designed to cause a worst-case prediction \cite{wong2018provable,weng2018towards,raghunathan2018semidefinite,anderson2020tightened}. However, as argued in \cite{webb2019statistical} and \cite{mangal2019robustness}, random input uncertainty better models reality in many applications. Areas that commonly use a probabilistic model of uncertainty include stochastic control and finance, where unpredictable measurement errors and state disturbances are assumed to be random \cite{aastrom2012introduction,follmer2016stochastic}. The stochastic framework also naturally encapsulates applications where unbounded uncertainties may exist, albeit with an extremely low probability. This is typical in real-world applications such as aviation \cite{zakrzewski2004randomized}. In fact, the International Organization for Standardization (ISO) asserts in their guide on safety aspects that there is never absolute safety, and therefore the goal is to achieve what they define to be \emph{tolerable risk} \cite{huang2021statistical,iso1999safety}. Not only are random uncertainties pervasive and realistic, they have been shown to pose a legitimate threat---small uniform noise causes misclassification rates of well over $10\%$ on MNIST and CIFAR-10 networks, and for Bernoulli noise the misclassification rates become drastically worse, sometimes reaching $100\%$ \cite{weng2019proven}.

The aforementioned motivations have led to an influx of recent works considering robustness against random inputs, which we review in Section \ref{sec: related_works}. Many of them make stringent assumptions on the structure of the network or input distribution, or the formal certification guarantees are relaxed or eliminated in order to enhance computational tractability. Since neural networks are more sensitive to adversarial inputs than to random input noise \cite{franceschi2018robustness}, worst-case sensitivity analyses are too conservative for random input noise when the goal at hand is to achieve a tolerable risk level, whereas high-probability robustness certificates may completely fail in the presence of adversaries; the two settings are disjoint and should be studied using distinct methods. Consequently, our study is intended to certify robustness against random input noise with minimal conservatism, and is not intended to assess adversarial robustness.

% Related Works.

\subsection{Related Works}
\label{sec: related_works}

In this section, we review the state-of-the-art methods for assessing robustness to random inputs, highlight their usages, and address their limitations. For instance, \cite{mangal2019robustness} defines robustness as the network output being Lipschitz continuous with high probability when two inputs are chosen randomly. Their proposed method is limited to neural networks composed of conditional affine transformations, e.g., ReLU networks. On the other hand, \cite{weng2019proven} analytically bounds the probability that a classifier's margin function exceeds a given value. Although this probabilistic method applies to general neural network models, it assumes that the random input noise is constrained to an $\ell_p$-norm ball and is either Gaussian or has independent coordinates. Furthermore, their bounding technique relies on worst-case analysis methods, making their resulting certificates relatively loose (see Section \ref{sec: numerical_experiments}).

In \cite{webb2019statistical}, robustness is measured by the probability that random input noise results in misclassification. The authors propose a sampling-based approximation of the robustness level. However, no theoretical guarantees are given to certify the network's robustness. Contrarily, \cite{dvijotham2018verification} formally bounds the probability that a random input maps to an unsafe output. However, the bounding function is nonconvex, and therefore obtaining tight bounds amounts to nonconvex optimization. Alternatively, \cite{franceschi2018robustness} bounds the size of a random input perturbation that causes a classifier prediction error with high probability. Their bounds provide elegant theoretical guarantees, but they depend on the network's worst-case robustness level, which is generally NP-hard to compute without approximation errors or additional assumptions \cite{weng2018towards,katz2017reluplex}.

The works \cite{fazlyab2019probabilistic} and \cite{couellan2021probabilistic} provide methods to guarantee that network outputs do not significantly deviate from the nominal output when the input is subject to random uncertainty. This corresponds to the problem of localizing the network outputs within the output space. The work \cite{fazlyab2019probabilistic} uses the output localization to issue high-probability guarantees for the network's robustness. However, their method requires solving a semidefinite program, and their results are demonstrated on small, single-layer networks, so it is not clear whether their method scales to realistic applications. The concentration bounds presented in \cite{couellan2021probabilistic} can be applied to deep networks, but their localization results do not immediately translate into a meaningful certificate of robustness.

The authors of \cite{devonport2020estimating} and \cite{li2021probabilistic} use input-output samples to learn how input noise is propagated to the output space through a method called scenario optimization. This approach naturally embeds the stochastic nature of the input noise into the assessment procedure. The work \cite{devonport2020estimating} provides a method to estimate a network's set of possible outputs, but this localization may fail to determine the safety of the output since the underlying optimization does not directly consider any safety specifications, e.g., classification boundaries. We demonstrate this phenomenon in Section \ref{sec: comparison_to_output_set_estimation}. Other scenario-based output set estimation techniques for general nonlinear maps, such as \cite{yang2014stochastic,fravolini2017probabilistic,sartipizadeh2019voronoi}, suffer from the same limitation in the context of neural network certification. Contrarily, \cite{li2021probabilistic} directly considers output safety in their scenario approach. However, their method makes use of an affine approximation to the network's nonlinear margin function, a worst-case analysis technique from the adversarial robustness literature. In our experiments, we show that this worst-case technique yields loose robustness bounds as the size of the input noise increases.

Finally, \cite{zakrzewski2004randomized} also uses sampled data to bound the probability of failure. Their guarantees take the probably approximately correct (PAC) form in terms of probability levels $\epsilon,\delta\in[0,1]$, and they improve the sample complexity from $O(\frac{1}{\epsilon^2}\log\frac{1}{\delta})$ of the naive Chernoff bound to $O(\frac{1}{\epsilon}\log\frac{1}{\delta})$. This is achieved by imposing a Bayesian framework and assuming that the failure probability follows a uniform prior distribution. As the authors remark, this is a ``very conservative choice.'' In contrast, the method to be proposed in this paper solves for the optimal (least conservative) robustness bound of this form with the same $O(\frac{1}{\epsilon}\log\frac{1}{\delta})$ sample complexity.

% Contributions.

\subsection{Contributions}
\label{sec: contributions}

In this paper, we develop a data-driven framework for certifying neural network robustness against random input noise using scenario optimization. Our direct approach avoids worst-case analysis techniques, such as those found in the adversarial robustness literature, e.g., \cite{weng2019proven}, and also avoids the need for selecting a Bayesian prior distribution governing the network's failure probability, as in \cite{zakrzewski2004randomized}. The procedure is capable of localizing network outputs into a general class of sets, and we develop sufficient conditions on this class to ensure that the procedure amounts to a convex optimization problem. Furthermore, we develop formal guarantees that the resulting robustness certificate holds with overwhelming probability upon using sufficiently many samples in the scenario optimization. Although the method is applicable to all neural networks and all input noise distributions, we show how to exploit the structure of networks with affinely bounded activation functions in order to reduce sample complexity.

	Our numerical experiments demonstrate that the proposed optimization is capable of issuing robustness certificates in cases where the two-step process of optimally localizing the outputs (e.g., using \cite{devonport2020estimating}) and then certifying them cannot, providing a novel perspective that output set estimation techniques do not necessarily work well for certification. Furthermore, we show on both synthetic networks and large MNIST and CIFAR-10 networks that our robustness bounds are much tighter than those obtained by the state-of-the-art method \cite{weng2019proven}, particularly for large noise levels.

% Outline.
\subsection{Outline}
\label{sec: outline}
We begin by formulating the certification problem in Section \ref{sec: problem_statement}. In Sections \ref{sec: formulating_the_certificate} and \ref{sec: data-driven_reformulation}, we propose an optimization to solve for the high-probability robustness certificate, and then show that it suffices to solve a data-driven convex optimization problem with sufficiently many samples. Next, we demonstrate how to exploit the neural network structure to reduce the sample complexity of the proposed method in Section \ref{sec: exploiting_network_structure}. We numerically illustrate the results and compare to state-of-the-art methods in Section \ref{sec: numerical_experiments}. We conclude in Section \ref{sec: conclusions}. Proofs and various supplementary materials are presented in the appendices.

% Notations.
\subsection{Notations}
\label{sec: notations}

The ceiling of $x\in\mathbb{R}$ is written $\lceil x \rceil$. For $x,y\in\mathbb{R}^n$, we define $[x,y] = \{z\in\mathbb{R}^n : x\le z\le y\}$, where the inequalities are interpreted element-wise. Given a set $\mathcal{X}$, we denote its power set by $\mathcal{P}(\mathcal{X})$. The Minkowski sum of sets $\mathcal{X}$ and $\mathcal{Y}$ is defined as $\mathcal{X}+\mathcal{Y}=\{x+y : x\in\mathcal{X},~y\in\mathcal{Y}\}$. We define $\mathbb{R}_{++} = \{x\in\mathbb{R} : x>0\}$. For a function $f\colon\mathbb{R}^m\to\mathbb{R}^n$, we write the image of $\mathcal{X}\subseteq\mathbb{R}^m$ under $f$ as $f(\mathcal{X})=\{f(x)\in\mathbb{R}^n : x\in\mathcal{X}\}$. If $g\colon\mathbb{R}^n\to\mathbb{R}^p$ is another function, we define the composition $g\circ f\colon\mathbb{R}^m\to\mathbb{R}^p$ by $g\circ f(x) = g(f(x))$. If $X\colon\Omega\to\mathbb{R}^n$ is a random variable on a probability space $(\Omega,\mathcal{F},\mathbb{P})$, $g\colon\mathbb{R}^n\to\mathbb{R}$ is a Borel measurable function, and $c\in\mathbb{R}$, we use the notation $\mathbb{P}(g(X)\ge c)$ to mean $\mathbb{P}(\{\omega\in\Omega : g(X(\omega)) \ge c\})$. Similarly, if $\mathcal{S}\subseteq\mathbb{R}^m$ is a Borel set and $h\colon\mathbb{R}^n\to\mathbb{R}^m$ is a Borel measurable function, we write $\mathbb{P}(h(X)\in \mathcal{S})$ to mean $\mathbb{P}(\{\omega\in \Omega : h(X(\omega))\in\mathcal{S}\})$. For a norm $\|\cdot\|$ on $\mathbb{R}^n$ we denote its dual norm by $\|\cdot\|_*$, where $\|y\|_* = \sup \{x^\top y : \|x\| \le 1\}$. We assume throughout that optimization problems are attained by a solution.

% Problem Statement.

\section{Problem Statement}
\label{sec: problem_statement}

\subsection{Network Description, Safe Set, and Safety Level}
\label{sec: network_description_safe_set_safety_level}

In this paper, we consider a Borel measurable neural network $f\colon\mathbb{R}^{n_x}\to\mathbb{R}^{n_y}$ with arbitrary structure and parameters.\footnote{Borel measurability of the network is almost always satisfied in practice. Indeed, every continuous function is Borel measurable.} We assume that the input to the network is a random variable $X\colon\Omega_X\to\mathbb{R}^{n_x}$ on a fixed probability space $(\Omega_X,\mathcal{F}_X,\mathbb{P}_X)$.\footnote{It is easily verified that all probabilistic expressions in this paper are well-defined from a measure-theoretic perspective. We leave out these measurability verifications for the sake of exposition.} We do not assume that the distribution $\mathbb{P}_X$ is exactly known---we only assume that we are able to sample from $\mathbb{P}_X$. The support of the probability measure $\mathbb{P}_X$ is called the \emph{input set}, which is denoted by $\mathcal{X}\subseteq\mathbb{R}^{n_x}$. The \emph{output set} of the network is defined to be $\mathcal{Y}=f(\mathcal{X})\subseteq\mathbb{R}^{n_y}$.

Next, consider a given convex polyhedral \emph{safe set} $\mathcal{S}=\{y\in\mathbb{R}^{n_y} : Ay+b\ge 0\}$, where $A\in\mathbb{R}^{n_s\times n_y}$ and $b\in\mathbb{R}^{n_s}$. Without loss of generality, we assume that $n_s = 1$, henceforth setting $A=a^\top\in\mathbb{R}^{1\times n_y}$ and $b\in\mathbb{R}$.\footnote{The polyhedral safe set assumption is without loss of generality. Suppose that the safe set is $\mathcal{S} = \{y\in\mathbb{R}^{n_y} : a^\top g(y) + b \ge 0\}$ for some nonlinear Borel measurable $g\colon\mathbb{R}^{n_y}\to\mathbb{R}^{n_z}$ and some $a\in\mathbb{R}^{n_z},b\in\mathbb{R}$. Then we can reduce the problem to our assumed form by considering $f' \coloneqq g\circ f$ to be the (Borel measurable) neural network and $\mathcal{S}' \coloneqq \{z\in \mathbb{R}^{n_z} : a^\top z + b \ge 0\}$ to be the (polyhedral) safe set, since then $f(x)\in\mathcal{S}$ if and only if $f'(x)\in \mathcal{S}'$, for all $x\in\mathcal{X}$. The architecture-dependent results of Section \ref{sec: exploiting_network_structure} must be applied to $f'$ with care, since $g$ must now be considered as another layer in the network, making Assumptions \ref{ass: preactivation_bounds} and \ref{ass: affine_activation_bounds} to follow slightly more stringent.} The results of this paper can be immediately generalized to the case where $n_s>1$. See Appendix \ref{sec: extension_to_general_polyhedral_safe_sets} for a detailed explanation.

The elements of the set $\mathcal{S}$ are considered to be \emph{safe}. For a point $y$ in the output space $\mathbb{R}^{n_y}$, the value $s(y) = a^\top y + b$ is called the \emph{safety level} of $y$. The point $y$ is safe if and only if its safety level is nonnegative. Broadly speaking, the overall goal of this paper is to certify that the random output $Y=f(X)$ is safe. When this holds for all or many of the possible outputs in $\mathcal{Y}$, we obtain a natural certificate for the robustness of the network against the random noise.

\begin{example}
\label{ex: classification_network_setup}
When $f$ is an $n_y$-class classifier and $\bar{x}\in\mathbb{R}^{n_x}$ is a deterministic nominal input with class $i^*\in{\arg\max}_{i\in\{1,2,\dots,n_y\}} f_i(\bar{x})$, a common goal is to certify that additive random noise $\delta$ on $\bar{x}$ does not cause misclassification \cite{weng2019proven}. This problem falls within our framework by  defining the safety level of $f(X)$ to be the \emph{margin function} value $g_i(X) \coloneqq f_{i^*}(X) - f_i(X)$ for $X=\bar{x}+\delta$.
\end{example}

\subsection{Various Notions of Robustness}
\label{sec: various_notions_of_robustness}

We now use the safety level of outputs to introduce three meaningful notions of robustness against random input noise, and discuss how they are related to one another.

\subsubsection{Deterministic Robustness Level}
\label{sec: deterministic_robustness_level}

The \emph{deterministic robustness level} of the network is defined as
\begin{equation}
	r^* = \inf_{y\in\mathcal{Y}} a^\top y + b. \label{eq: deterministic_robustness_level}
\end{equation}
If $r^*\ge 0$, then $\mathcal{Y}\subseteq \mathcal{S}$, implying that the random output $Y=f(X)$ is safe with probability one. This notion of robustness coincides with that used when considering adversarial inputs \cite{wong2018provable,raghunathan2018semidefinite,anderson2020tightened}, but the resulting worst-case safety level is often much lower than the safety levels of random outputs in practice \cite{webb2019statistical}. Consequently, using $r^*$ to assess robustness may falsely indicate that the network is sensitive to the input noise.

\subsubsection{Approximate Robustness Level}
\label{sec: approximate_robustness_level}

Although $r^*$ can issue strong guarantees about the safety of the network output, \eqref{eq: deterministic_robustness_level} amounts to an intractable nonconvex optimization problem, since $\mathcal{Y}$ is generally a nonconvex set. Instead of computing $r^*$, we can consider approximating it by
\begin{equation}
	\hat{r}(\hat{\mathcal{Y}}) = \inf_{y\in\hat{\mathcal{Y}}} a^\top y + b, \label{eq: approximate_robustness_level}
\end{equation}
where $\hat{\mathcal{Y}}\subseteq\mathbb{R}^{n_y}$, termed the \emph{surrogate output set}, is more tractable than $\mathcal{Y}$, and preferably convex. We call \eqref{eq: approximate_robustness_level} the \emph{approximate robustness level} of the network. If $\mathcal{Y}\subseteq\hat{\mathcal{Y}}$, then $\hat{r}(\hat{\mathcal{Y}})\le r^*$. In this case, if $\hat{r}(\hat{\mathcal{Y}}) \ge 0$, then the random output $Y=f(X)$ is safe with probability one. In general, choosing $\hat{\mathcal{Y}}$ to cover $\mathcal{Y}$ makes $\hat{r}(\hat{\mathcal{Y}})$ an over-conservative measure of robustness for the same reasons $r^*$ is.

\subsubsection{Probabilistic Robustness Level}
\label{sec: probabilistic_robustness_level}

The notion of deterministic robustness is too strong for applications involving random input noise, as many input distributions have unbounded support or have their worst-case inputs in regions of low probability measure \cite{webb2019statistical}. Furthermore, \eqref{eq: deterministic_robustness_level} and \eqref{eq: approximate_robustness_level} neglect the distributional information given for $X$. This conservatism means that the robustness levels \eqref{eq: deterministic_robustness_level} and \eqref{eq: approximate_robustness_level}, with $\mathcal{Y}\subseteq\hat{\mathcal{Y}}$, are generally unable to certify that $Y$ is safe, even when $Y$ concentrates around safe outputs. Consequently, for a prescribed probability level $\epsilon\in[0,1]$, we define the more natural \emph{probabilistic robustness level} of the network to be
\begin{equation}
	\bar{r}(\epsilon) = \sup\{ r\in\mathbb{R} : \mathbb{P}_X(a^\top f(X) + b \ge r) \ge 1-\epsilon \}. \label{eq: probabilistic_robustness_level}
\end{equation}
Intuitively, the random output $f(X)$ has a safety level at least $\bar{r}(\epsilon)$ with high probability. In the case that $\bar{r}(\epsilon)\ge 0$, we certify that the random output $Y=f(X)$ is safe with probability $1-\epsilon$, and we say that the network is \emph{probabilistically robust}.\footnote{Note the distinction: ``safety'' is a property of outputs, whereas ``robustness'' is a property of the neural network (with respect to the noisy input distribution $\mathbb{P}_X$). We are always careful when using these terminologies.} Another interpretation of probabilistic robustness is that the majority of possible outputs (with respect to the distribution $\mathbb{P}_X$) are safe. The probabilistic robustness level is catered towards our setting of random input noise, and, compared to the worst-case alternatives, reduces conservatism by considering the actual likelihood of the possible inputs. This is precisely the notion of robustness we adopt in this paper. We remark that this definition of probabilistic robustness coincides with that used in \cite{webb2019statistical} and \cite{weng2019proven}, albeit our subsequent analysis and guarantees vastly differ from these works.

\begin{example}
\label{ex: classification_network_probabilistic_robustness_level}
Consider again the classification network in Example \ref{ex: classification_network_setup}. In this setting, if $r^*\ge 0$, then the probability of misclassification is zero, in which case the noise $\delta$ is not important. On the other hand, if there exists a perturbed input $\bar{x}+\delta$ in the input set $\mathcal{X}$ that is misclassified, then $r^*<0$. It may be the case, however, that this misclassification occurs with a sufficiently low probability with respect to the tolerance $\epsilon$. This is precisely what we seek to certify: that the probability of misclassification is less than $\epsilon$, which mathematically amounts to showing that $\bar{r}(\epsilon)\ge 0$.
\end{example}

In this paper, we aim to certify the safety of $Y=f(X)$ by lower-bounding the probabilistic robustness level $\bar{r}(\epsilon)$. A trivial lower bound is easily verified: $r^*\le \bar{r}(\epsilon)$ for all $\epsilon\in[0,1]$, and $r^* = \bar{r}(0)$. However, as we will see in Section \ref{sec: numerical_experiments}, this approach of considering \emph{worst-case} inputs instead of \emph{likely} inputs, often results in a very loose lower bound, sometimes failing to issue a robustness guarantee at all. In the next section, we show that by using a special type of surrogate output set in $\hat{r}(\hat{\mathcal{Y}})$, we can optimize a lower bound on $\bar{r}(\epsilon)$ and obtain an estimate of the output set $\mathcal{Y}$ as a natural byproduct.

% Formulating the Certificate.

\section{Formulating the Certificate}
\label{sec: formulating_the_certificate}

\subsection{Bounding the Probabilistic Robustness Level}
\label{sec: bounding_the_probabilistic_robustness_level}

As we have seen, $\hat{r}(\hat{\mathcal{Y}})\approx r^* \le \bar{r}(\epsilon)$. Two natural questions arise. 1) Can one use $\hat{r}(\hat{\mathcal{Y}})$ to certifiably lower-bound the quantity $\bar{r}(\epsilon)$ of interest? 2) If so, how can $\hat{\mathcal{Y}}$ be chosen to optimize the bound? In this section, we study the first question. As it turns out, such a lower bound holds so long as $\hat{\mathcal{Y}}$ has high enough coverage over $\mathcal{Y}$. Before proving this claim, we formally define this notion of coverage.

\begin{definition}
	\label{def: epsilon-cover}
	Let $\hat{\mathcal{Y}}$ be a subset of $\mathbb{R}^{n_y}$. For $\epsilon\in[0,1]$, the set $\hat{\mathcal{Y}}$ is said to be an \emph{$\epsilon$-cover} of $\mathcal{Y}=f(\mathcal{X})$ if $\hat{\mathcal{Y}}$ is Borel measurable and $\mathbb{P}_X(f(X)\in\hat{\mathcal{Y}}) \ge 1-\epsilon$.
\end{definition}

Intuitively, an $\epsilon$-cover of $\mathcal{Y}$ is a set that contains $Y=f(X)$ with high probability. If we can compute an $\epsilon$-cover of $\mathcal{Y}$, then we will have localized the output with high confidence. By restricting $\hat{\mathcal{Y}}$ in \eqref{eq: approximate_robustness_level} to be an $\epsilon$-cover of $\mathcal{Y}$, we ensure that the approximate robustness level takes into account the \emph{likely} inputs $X$, but not necessarily the worst-case inputs. Consequently, this permits $\hat{r}(\hat{\mathcal{Y}})$ to be greater than $r^*$, reducing the conservatism in our measure of robustness caused by unlikely worst-case inputs. We now show that this special type of surrogate output set is a good enough estimate of $\mathcal{Y}$ to maintain the lower bound on $\bar{r}(\epsilon)$ that we seek.

\begin{proposition}
	\label{prop: lower_bound_from_epsilon-cover}
	Let $\hat{\mathcal{Y}}$ be an arbitrary subset of $\mathbb{R}^{n_y}$. If $\hat{\mathcal{Y}}$ is an $\epsilon$-cover of $\mathcal{Y}=f(\mathcal{X})$, then
	\begin{equation}
		\hat{r}(\hat{\mathcal{Y}}) \le \bar{r}(\epsilon). \label{eq: lower_bound_from_epsilon-cover}
	\end{equation}
\end{proposition}
	\begin{proof}
		See Appendix \ref{app: proofs}.
	\end{proof}

\begin{comment}
% Commenting out the following paragraph since it is not very insightful and we need space in the TCNS submission.
Proposition \ref{prop: lower_bound_from_epsilon-cover} can be interpreted as follows. Suppose that $\hat{\mathcal{Y}}$ is chosen to be an $\epsilon$-cover of $\mathcal{Y}$ and the approximate robustness level, $\hat{r}(\hat{\mathcal{Y}})$, is computed using $\hat{\mathcal{Y}}$ as the surrogate output set. Then each of the following two events hold with probability at least $1-\epsilon$: 1) the random output $Y=f(X)$ of the neural network has a safety level at least $\hat{r}(\hat{\mathcal{Y}})$, and 2) the output $Y$ is contained in $\hat{\mathcal{Y}}$. In particular, if $\hat{r}(\hat{\mathcal{Y}})\ge 0$, then the random output $Y$ is safe with probability at least $1-\epsilon$. The proposition thereby shows that the approximate robustness level can be used for certification and localization of the output so long as the surrogate output set is chosen appropriately.
\end{comment}

\subsection{Optimizing the Bound}
\label{sec: optimizing_the_bound}

From Proposition \ref{prop: lower_bound_from_epsilon-cover}, we know that $\epsilon$-covers constitute good choices of the surrogate output set $\hat{\mathcal{Y}}$ used to compute the approximate robustness level. This is because the random output $Y=f(X)$ is guaranteed to have safety level at least $\hat{r}(\hat{\mathcal{Y}})$ with high probability. However, it is entirely possible that the choice of $\epsilon$-cover results in $\hat{r}(\hat{\mathcal{Y}})<0$, even when the network is probabilistically robust. In this case, $\hat{r}(\hat{\mathcal{Y}})$ fails to issue a high-probability certificate for the safety of the random output $Y=f(X)$, despite $\hat{\mathcal{Y}}$ being able to localize it.

To overcome the above problem, we turn to studying our second inquiry from earlier, namely, how to optimize the lower bound \eqref{eq: lower_bound_from_epsilon-cover}. This amounts to finding an $\epsilon$-cover of $\mathcal{Y}$ that maximizes $\hat{r}(\hat{\mathcal{Y}})$. Since optimizing over all subsets of $\mathbb{R}^{n_y}$ is intractable, we restrict our search to sets within a class $\mathcal{H} = \{h(\theta) : \theta\in\Theta\}$ parameterized by a parameter set $\Theta\subseteq \mathbb{R}^p$ and a set-valued function $h\colon\mathbb{R}^p \to \mathcal{P}(\mathbb{R}^{n_y})$. We assume throughout that the class $\mathcal{H}$ is chosen such that $h(\theta)$ is a Borel set for all parameters $\theta\in\Theta$. A concrete example of one such class is given below.

\begin{example}
	\label{ex: norm_ball_class}
	Let $\|\cdot\|$ be a fixed norm on $\mathbb{R}^{n_y}$ and $\Theta = \mathbb{R}^{n_y}\times\mathbb{R}_{++}$. Defining $p=n_y+1$, let $h\colon\mathbb{R}^{p}\to\mathcal{P}(\mathbb{R}^{n_y})$ be defined by $h(\bar{y},r) = \{y\in\mathbb{R}^{n_y} : \|y-\bar{y}\|\le r\}$. Then, $\Theta$ and $h(\cdot)$ define the class of $\|\cdot\|$-norm balls:
	\begin{equation*}
		\mathcal{H} = \big\{ \{y\in\mathbb{R}^{n_y} : \|y-\bar{y}\|\le r\} : r>0, ~ \bar{y}\in\mathbb{R}^{n_y} \big\}.
	\end{equation*}
\end{example}

The problem of choosing $h(\cdot)$ and $\Theta$ (and therefore also $\mathcal{H}$) is discussed in detail in Section \ref{sec: data-driven_reformulation}. By restricting our search for $\epsilon$-covers to within the class $\mathcal{H}$, our search reduces to maximizing the approximate robustness level over the parameter set $\Theta$. By slightly abusing notation, we denote the dependence of the approximate robustness level on the parameter $\theta$ explicitly as $\hat{r}(\theta) = \inf\{a^\top y + b : y\in h(\theta)\}$, and we formulate the following optimization problem:
\begin{equation}
	\begin{aligned}
	& \underset{\theta\in\Theta}{\text{maximize}} && \hat{r}(\theta) - \lambda v( \theta ) \\
	& \text{subject to} && \mathbb{P}_X(f(X)\in h(\theta)) \ge 1-\epsilon,
	\end{aligned} \label{eq: chance-constrained_problem}
\end{equation}
where $\lambda\ge 0$ and $v \colon \mathbb{R}^p\to\mathbb{R}$ is taken to be a nonnegative convex function on $\Theta$ that increases with the volume of $h(\theta)$. The objective $\hat{r}(\theta)$ in \eqref{eq: chance-constrained_problem} is the approximate robustness level computed using the set $h(\theta)$ as the surrogate output set. The constraint $\mathbb{P}_X(f(X)\in h(\theta)) \ge 1-\epsilon$ enforces that we only consider parameters $\theta$ such that $h(\theta)$ is an $\epsilon$-cover of $\mathcal{Y}$. The regularization term $-\lambda v(\theta)$ penalizes the size of $h(\theta)$. This makes the set $h(\theta)$ as small as possible while maintaining its $\epsilon$-coverage, thereby yielding the tightest localization of the output $Y$. The regularization is done at the expense of a slightly suboptimal bound \eqref{eq: lower_bound_from_epsilon-cover}, and can be eliminated by setting $\lambda=0$, if no localization of the output $Y$ is desired. On the other hand, increasing $\lambda$ amounts to putting more assessment effort into localizing $Y$, making the $\epsilon$-cover $h(\theta)$ a better estimate of $\mathcal{Y}$. This certification-localization tradeoff is experimentally explored in Section \ref{sec: illustrative_example}.

\begin{comment}
% This paragraph is in response to a specific ACC review, which asked why we are maximizing instead of minimizing.
To reiterate, we are maximizing the approximate robustness level $\hat{r}(\theta)$ in order to find the best lower bound on the probabilistic robustness level $\bar{r}(\epsilon)$. If we find that $\hat{r}(\theta)\ge 0$ for some $\theta$ feasible for \eqref{eq: chance-constrained_problem}, then by Proposition \ref{prop: lower_bound_from_epsilon-cover} we have certified that $\bar{r}(\epsilon)\ge 0$, so the output $Y=f(X)$ is safe with probability at least $1-\epsilon$. Because of this inherent goal to maximize the lower bound on $\bar{r}(\epsilon)$, changing the optimization \eqref{eq: chance-constrained_problem} to a minimization problem would be meaningless to our current study.
\end{comment}

% Data-Driven Reformulation.

\section{Data-Driven Reformulation}
\label{sec: data-driven_reformulation}

Even when the set $h(\theta)$ is convex for all $\theta\in\Theta$, the probabilistic constraint in \eqref{eq: chance-constrained_problem} is in general nonconvex \cite{nemirovski2007convex}. Constraints of this form are referred to as \emph{chance constraints}, and there exist various approaches to reformulating and relaxing them into convex constraints. Since the problem at hand considers neural networks whose models are usually complicated to analyze, but whose input-output samples are easily obtained, we seek a data-driven approach to approximately enforcing the chance constraint in \eqref{eq: chance-constrained_problem}, without losing the robustness certificate provided by the solution. The \emph{scenario approach} is a popular method within the stochastic optimization and robust control communities that replaces the chance constraint with hard constraints on a number of random samples \cite{nemirovski2007convex,tempo2012randomized,campi2009scenario,luedtke2008sample}. The scenario approach has been studied for general problems, even those with nonconvex objectives and those whose resulting hard constraints are nonconvex \cite{campi2018general}. However, the most powerful use of scenario optimization arises when the resultant scenario problem is convex, as then \emph{a priori} probabilistic guarantees can be made about the solution's feasibility for the original chance constraint. As we will soon see, this sampling-based method fits nicely into the framework of our problem, and maintains a lower bound on $\bar{r}(\epsilon)$ with high probability, provided that a sufficiently large number of samples is used and the scenario problem is convex.

To implement the scenario approach, suppose that $\{x_j : j\in\{1,2,\dots,N\}\} \subseteq \mathcal{X}$ is a set of $N$ independent and identically distributed samples drawn from $\mathbb{P}_X$. For each input $x_j$, we compute its corresponding output $y_j=f(x_j)$. Then, replacing the chance constraint in \eqref{eq: chance-constrained_problem} with $N$ hard constraints on the samples $y_j$ yields the following scenario optimization:
\begin{equation}
	\begin{aligned}
	& \underset{\theta\in\Theta}{\text{maximize}} & & \hat{r}(\theta) - \lambda v(\theta) \\
	& \text{subject to} & & y_j\in h(\theta) ~ \text{for all $j\in\{1,2,\dots,N\}$}.
	\end{aligned} \label{eq: scenario_problem}
\end{equation}
Note that, because the data $y_j$ is random, solutions $\theta^*$ to \eqref{eq: scenario_problem} are random. We assume throughout the paper that \eqref{eq: scenario_problem} is attained by a solution $\theta^*$, and we denote the probability space on which it is defined by $(\Omega_{\theta^*},\mathcal{F}_{\theta^*},\mathbb{P}_{\theta^*})$.

\begin{remark}
	The above assumption of independent and identically distributed samples is critical for relating the solution $\theta^*$ back to the original chance-constrained problem \eqref{eq: chance-constrained_problem}. In particular, it is a key assumption on which the forthcoming high-probability robustness certificate in Theorem \ref{thm: high-probability_guarantees} rests. Despite these assumptions holding in many practical models, the independence may be violated in certain applications with inherent time-correlation between samples, and the assumption prevents the use of selective sampling to improve the efficiency of the scenario approach.

	The identical distribution assumption is also critical, and it may be violated in two main ways. First, the underlying distribution of the data used in \eqref{eq: scenario_problem} may change from sample to sample, and second, the underlying noise distribution of the actual input may be different in practice from the samples used in the robustness certification procedure. Despite these sources of modeling error, our scenario-based approach can be modified into a distributionally robust variant to still give high-probability robustness certificates in the case that the distribution of the input is contained in a finite set of possible distributions.
\end{remark}

As mentioned in Section \ref{sec: related_works}, the scenario approach was used recently in reachable set estimation for dynamical systems \cite{devonport2020estimating}. We remark that \eqref{eq: scenario_problem} recovers the scenario optimization of \cite{devonport2020estimating} in the special case that the objective is re-scaled to $\frac{1}{\lambda}\hat{r}(\theta) - v(\theta)$ and $\lambda\to\infty$, the regularizer $v(\theta)$ equals the volume of the set $h(\theta)$, and $\mathcal{H}$ is the norm ball class. This reduction amounts to finding the tightest norm ball $\epsilon$-cover of $\mathcal{Y}$, without regard to optimizing the lower bound \eqref{eq: lower_bound_from_epsilon-cover} of interest. In Section \ref{sec: comparison_to_output_set_estimation}, we demonstrate the necessity for the more general formulation \eqref{eq: scenario_problem} by giving an example where reducing to the special case of \cite{devonport2020estimating} causes robustness certification to fail, despite finding the tightest $\epsilon$-cover of $\mathcal{Y}$.

% commenting paragraph out in tcns revision 2
\begin{comment}
Taking $\lambda=0$ in \eqref{eq: scenario_problem} amounts to using the sampled data purely for optimizing the bound \eqref{eq: lower_bound_from_epsilon-cover}, so no effort is placed into localizing the output $Y$. In Appendix \ref{sec: special_case-class_of_half-spaces}, we show that, under this setting, taking $\mathcal{H}$ to be the class of all half-spaces in $\mathbb{R}^{n_y}$ makes \eqref{eq: scenario_problem} coincide with directly applying scenario optimization to estimate $\bar{r}(\epsilon)$ via its definition \eqref{eq: probabilistic_robustness_level}. These observations illustrate the generality of our proposed certification problem, as it provides a unified framework for simultaneous certification \emph{and} localization of the random output $Y$.
\end{comment}

Although the scenario approach has eliminated the chance constraint from \eqref{eq: chance-constrained_problem}, there remain two problems to consider. First, it is not immediately clear whether \eqref{eq: scenario_problem} is convex or computationally tractable, as it has an inherent max-min optimization structure. However, it is important to ensure the problem's convexity, since no \emph{a priori} guarantees can be made regarding the feasibility of $\theta^*$ for the original chance constraint in the general case of nonconvex scenario optimization \cite{campi2018general}. In Section \ref{sec: conditions_for_convex_optimization}, we leverage results from parametric optimization to develop conditions on our choice of $\Theta$ and $h(\cdot)$ to ensure that \eqref{eq: scenario_problem} is convex. Second, the solution of \eqref{eq: scenario_problem} gives a random approximation to the solution of \eqref{eq: chance-constrained_problem}, which optimizes the bound \eqref{eq: lower_bound_from_epsilon-cover} on $\bar{r}(\epsilon)$. In Section \ref{sec: high-probability_guarantees} we develop formal guarantees showing that the solution of \eqref{eq: scenario_problem} maintains a lower bound on $\bar{r}(\epsilon)$ with high probability, provided that $N$ is sufficiently large.

% Conditions for Convex Optimization.

\subsection{Conditions for Convex Optimization}
\label{sec: conditions_for_convex_optimization}

In this section, we consider the choices of the parameter set $\Theta$ and the set-valued function $h(\cdot)$ on lower-bounding $\bar{r}(\epsilon)$, and on the tractability of the resulting scenario problem \eqref{eq: scenario_problem}. A key insight is this: an $\epsilon$-cover of $\mathcal{Y}$ may in general be much larger than $\mathcal{Y}$ itself. This is because regions of an $\epsilon$-cover that do not intersect with $\mathcal{Y}$ also do not count towards the coverage proportion $1-\epsilon$. Therefore, if the class $\mathcal{H}$ from which we choose an $\epsilon$-cover does not have high enough complexity, then the $\epsilon$-covers within $\mathcal{H}$ may need to be exceedingly large in order to achieve $\epsilon$-coverage.
\begin{comment}
% Commenting out the following sentence to make more room in TCNS paper.
As an example, consider covering a line segment in $\mathbb{R}^2$ first with an $\ell_2$-norm ball, and then, instead, with an ellipsoid. Clearly, the additional complexity of the ellipsoid allows for tighter coverage of the line segment.
\end{comment}

The problem with unnecessarily large $\epsilon$-covers is that the feasible set in the optimization defining $\hat{r}(\theta)$ includes many vectors $y$ that may not be actual outputs in $\mathcal{Y}$. In this case, $\hat{r}(\theta)$ is small, even though $\bar{r}(\epsilon)$ may be large. To avoid this problem, our choice of $\Theta$ and $h(\cdot)$ should ensure that the class $\mathcal{H}$ has high enough complexity. However, our choices should also yield a scenario problem \eqref{eq: scenario_problem} that is convex. Indeed, Theorem \ref{thm: convex_scenario_optimization} gives sufficient conditions for the convexity of the scenario problem. Before presenting these conditions, let us recall a fundamental definition for set-valued functions.

\begin{definition}
	\label{def: convexity_of_set-valued_functions}
	A set-valued function $h\colon\mathbb{R}^p\to\mathcal{P}(\mathbb{R}^{n_y})$ is said to be \emph{convex} on a convex set $\Theta\subseteq\mathbb{R}^p$ if
	\begin{equation*}
		\big(\lambda h(\theta_1) + (1-\lambda)h(\theta_2)\big) \subseteq h(\lambda\theta_1 + (1-\lambda)\theta_2)
	\end{equation*}
	for all $\theta_1,\theta_2\in\Theta$ and all $\lambda\in[0,1]$. The function $h(\cdot)$ is said to be \emph{concave} on $\Theta$ if
	\begin{equation*}
		h(\lambda\theta_1 + (1-\lambda)\theta_2) \subseteq \big(\lambda h(\theta_1) + (1-\lambda)h(\theta_2)\big)
	\end{equation*}
	for all $\theta_1,\theta_2\in\Theta$ and all $\lambda\in[0,1]$. Finally, the function $h(\cdot)$ on $\Theta$ is said to be \emph{affine} if it is both convex and concave.
\end{definition}

\begin{comment}
\begin{remark}
	The above definitions of convexity and concavity for a set-valued function are consistent with those used in set-valued optimization and coincide with the traditional definition of cone-convexity. In particular, a convex cone $C\subseteq\mathbb{R}^{n_y}$ defines an order relation on $\mathcal{P}(\mathbb{R}^{n_y})$; $A,B\in\mathcal{P}(\mathbb{R}^{n_y})$ are ordered as $A\le_C B$ if and only if $B\subseteq A+C$ \cite{hamel2015set}. Taking $C=\{0\}$ yields the familiar partial order of subset inclusion, and Definition \ref{def: convexity_of_set-valued_functions} amounts to the usual definition of cone-convexity with respect to the order $\le_{\{0\}}$.
\end{remark}
\end{comment}

\begin{example}
	\label{ex: norm_ball_functions_are_affine}
	Consider the norm ball class $\mathcal{H}$ given in Example \ref{ex: norm_ball_class}. It is easily verified by Definition \ref{def: convexity_of_set-valued_functions} that the set-valued function $h(\cdot)$ defining the class $\mathcal{H}$ is affine on $\Theta=\mathbb{R}^{n_y}\times\mathbb{R}_{++}$.
\end{example}

With tools for defining and proving convexity of set-valued functions now in place, we can present conditions under which the scenario optimization \eqref{eq: scenario_problem} is convex, and therefore easily solvable. In Theorem \ref{thm: high-probability_guarantees}, we will also rely on this convexity to guarantee with high probability that $h(\theta^*)$ is an $\epsilon$-cover of $\mathcal{Y}$ and that our desired lower bound $\hat{r}(\theta^*)\le \bar{r}(\epsilon)$ holds. Generating such guarantees is in general not possible for nonconvex scenario optimization \cite{campi2018general}, further illustrating the importance of Theorem \ref{thm: convex_scenario_optimization} below.

\begin{theorem}
	\label{thm: convex_scenario_optimization}
	Consider the scenario problem \eqref{eq: scenario_problem}. Suppose that $\Theta$ takes the form
	\begin{equation*}
		\Theta = \{\theta\in\mathbb{R}^p : g_i(\theta) \le 0 ~ \text{for all $i\in\{1,2,\dots,m\}$}\},
	\end{equation*}
	where every $g_i\colon\mathbb{R}^p\to\mathbb{R}$ is convex. Furthermore, suppose that $h(\cdot)$ is a concave set-valued function that takes the form
	\begin{equation*}
		h(\theta) = \{y\in\mathbb{R}^{n_y} : h_i(y,\theta) \le 0 ~ \text{for all $i\in\{1,2,\dots,n\}$}\},
	\end{equation*}
	where $h_i\colon\mathbb{R}^{n_y}\times\mathbb{R}^p \to\mathbb{R}$ and $h_i(y,\cdot)$ is convex for all $y\in\mathbb{R}^{n_y}$. Then, \eqref{eq: scenario_problem} is a convex optimization problem.
\end{theorem}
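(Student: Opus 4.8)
The plan is to verify that the objective is concave and that the feasible set is convex, since maximizing a concave function over a convex set is a convex optimization problem. I would organize the argument into three parts: (i) the parameter set $\Theta$ is convex; (ii) each scenario constraint $\{\theta : y_j \in h(\theta)\}$ defines a convex subset of $\mathbb{R}^p$; and (iii) the objective $\hat{r}(\theta) - \lambda v(\theta)$ is a concave function of $\theta$ on $\Theta$. Part (i) is immediate: $\Theta$ is an intersection of sublevel sets $\{g_i \le 0\}$ of convex functions, hence convex. Part (ii) is also short: fixing $y = y_j$, the condition $y_j \in h(\theta)$ is equivalent to $h_i(y_j,\theta) \le 0$ for all $i \in \{1,\dots,n\}$, and each map $\theta \mapsto h_i(y_j,\theta)$ is convex by hypothesis, so this is again an intersection of convex sublevel sets. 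Intersecting over $j$ and with $\Theta$ keeps the feasible region convex.

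The substantive step is part (iii), namely showing that $\theta \mapsto \hat{r}(\theta) = \inf\{a^\top y + b : y \in h(\theta)\}$ is concave on $\Theta$; concavity of $-\lambda v$ follows from $\lambda \ge 0$ and convexity of $v$, and the sum of concave functions is concave. To prove $\hat{r}$ concave I would use the concavity of the set-valued map $h$. Fix $\theta_1,\theta_2 \in \Theta$ and $\mu \in [0,1]$, and set $\theta_\mu = \mu\theta_1 + (1-\mu)\theta_2 \in \Theta$. By Definition~\ref{def: convexity_of_set-valued_functions}, concavity of $h$ gives $h(\theta_\mu) \subseteq \mu h(\theta_1) + (1-\mu) h(\theta_2)$. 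Taking the infimum of the affine functional $y \mapsto a^\top y + b$ over the larger set on the right can only decrease (or match) its value, so
\begin{equation*}
\hat{r}(\theta_\mu) = \inf_{y \in h(\theta_\mu)} (a^\top y + b) \ge \inf_{y \in \mu h(\theta_1) + (1-\mu) h(\theta_2)} (a^\top y + b).
\end{equation*}
Now any point in the Minkowski combination is $y = \mu y_1 + (1-\mu) y_2$ with $y_i \in h(\theta_i)$, and $a^\top y + b = \mu(a^\top y_1 + b) + (1-\mu)(a^\top y_2 + b)$, so the right-hand infimum separates as $\mu \inf_{y_1 \in h(\theta_1)}(a^\top y_1 + b) + (1-\mu)\inf_{y_2 \in h(\theta_2)}(a^\top y_2 + b) = \mu \hat{r}(\theta_1) + (1-\mu)\hat{r}(\theta_2)$. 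Chaining the two displays yields $\hat{r}(\theta_\mu) \ge \mu \hat{r}(\theta_1) + (1-\mu)\hat{r}(\theta_2)$, which is concavity.

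The main obstacle I anticipate is the bookkeeping around the infimum over the Minkowski sum and making sure the separation of the infimum is rigorous — in particular that $h(\theta_1)$ and $h(\theta_2)$ are nonempty so that the decomposition is well-posed and the value is not vacuously $+\infty$; the paper's standing assumption that optimization problems are attained by a solution handles attainment, and feasibility of the scenario constraints ensures the relevant sets are nonempty. A secondary point worth stating explicitly is why the hypothesis that $h$ itself takes the sublevel-set form $\{y : h_i(y,\theta) \le 0\}$ is needed at all for convexity of the \emph{feasible set}, even though concavity of $h$ already drives the objective argument — the two structural assumptions play complementary roles, one for the constraints and one for the objective. Once these are assembled, combining (i)–(iii) shows \eqref{eq: scenario_problem} maximizes a concave objective over a convex set, completing the proof.
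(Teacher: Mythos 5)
Your proposal is correct, and its overall architecture matches the paper's proof exactly: show the feasible set is convex (parts (i) and (ii), which are identical to the paper's treatment of the constraints) and show the objective is concave. The one place you diverge is the key step, concavity of $\theta\mapsto\hat{r}(\theta)=\inf\{a^\top y+b: y\in h(\theta)\}$: the paper disposes of this by citing Proposition 3.1 of \cite{fiacco1986convexity} on parametric optimization, verifying its three hypotheses (joint concavity of $(y,\theta)\mapsto a^\top y+b$, concavity of the set-valued map $h$, convexity of $\Theta$), whereas you prove it directly from Definition~\ref{def: convexity_of_set-valued_functions} via the inclusion $h(\theta_\mu)\subseteq\mu h(\theta_1)+(1-\mu)h(\theta_2)$ and the exact separation of the infimum of an affine functional over a Minkowski sum. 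Your argument is valid — the inequality directions are right (an infimum over a subset dominates the infimum over the superset), and the separation $\inf_{y_1,y_2}[\mu(a^\top y_1+b)+(1-\mu)(a^\top y_2+b)]=\mu\hat{r}(\theta_1)+(1-\mu)\hat{r}(\theta_2)$ holds because the two variables decouple. What the direct route buys is a self-contained, elementary proof; what it costs is generality: it leans on the objective being affine in $y$ so that the Minkowski separation is exact, while Fiacco's proposition covers arbitrary jointly concave objectives (a generality the paper does not need here but implicitly inherits). Your remarks on nonemptiness and on the complementary roles of the two structural hypotheses on $h$ are sound and, if anything, make the argument more careful than the paper's.
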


\begin{proof}
	See Appendix \ref{app: proofs}.
\end{proof}

\begin{remark}
	\label{rem: convex_scenario_problem}
	Theorem \ref{thm: convex_scenario_optimization} is easily extended to include affine equality constraints in the forms taken by $\Theta$ and $h(\theta)$. Additionally, if $h_i(y,\theta)$ in Theorem \ref{thm: convex_scenario_optimization} is \emph{jointly} convex in $(y,\theta)$ for all $i$, one can show that $h(\cdot)$ is an affine set-valued function, and therefore $\hat{r}(\cdot)$ in \eqref{eq: scenario_problem} is affine (see, e.g., Proposition 4.2 of \cite{fiacco1986convexity}). Therefore, if $v(\cdot)$ is also affine, the scenario problem \eqref{eq: scenario_problem} has an affine objective.
\end{remark}

Theorem \ref{thm: convex_scenario_optimization} precisely answers our earlier inquiry: the class $\mathcal{H}$ should be complex enough to contain tight $\epsilon$-covers of the output set $\mathcal{Y}$, but at the same time $\Theta$ should be defined by convex constraints and $h(\cdot)$ should be taken as a concave set-valued function also defined by convex constraints. Note that these conditions on $h(\cdot)$ are not as restrictive as they may seem. In particular, Example \ref{ex: norm_ball_functions_are_affine} shows for the norm ball class that $h(\cdot)$ is affine (and therefore concave) and defined by convex constraints, and that this holds \emph{for all norms on $\mathbb{R}^{n_y}$}, even though norm functions themselves are not affine. Therefore, Theorem \ref{thm: convex_scenario_optimization} guarantees that the scenario optimization \eqref{eq: scenario_problem} using the norm ball class is a convex problem. We verify this fact in the following example.

\begin{example}
	\label{ex: scenario_optimization_with_norm_ball_class}
	Recall the norm ball class of Examples \ref{ex: norm_ball_class} and \ref{ex: norm_ball_functions_are_affine}. We show that \eqref{eq: scenario_problem} using this class is convex. Indeed, the approximate robustness level is
	\begin{align*}
		\hat{r}(\bar{y},r) = \inf_{\|y-\bar{y}\|\le r} a^\top y + b = a^\top \bar{y} - r\|a\|_* + b,
	\end{align*}
	which is affine in the optimization variable $\theta=(\bar{y},r)$. Hence, the scenario problem reduces to
	\begin{equation}
		\begin{aligned}
		& \underset{(\bar{y},r)\in\mathbb{R}^{n_y}\times\mathbb{R}_{++}}{\text{maximize}} && b + a^\top \bar{y} - r\|a\|_* - \lambda v(\bar{y},r) \\
		& \text{subject to} && \|y_j-\bar{y}\| \le r ~ \text{for all $j\in\{1,2,\dots,N\}$},
		\end{aligned} \label{eq: scenario_optimization_with_norm_ball_class}
	\end{equation}
	which is a convex problem since $v(\cdot)$ is convex.
\end{example}

% High-Probability Guarantees.

\subsection{High-Probability Guarantees}
\label{sec: high-probability_guarantees}

We now turn to consider the randomness of the scenario problem's optimal value. In particular, we ask the following question: Does the random solution to \eqref{eq: scenario_problem} maintain a certified lower bound on $\bar{r}(\epsilon)$? In Theorem \ref{thm: high-probability_guarantees}, we show that the answer is affirmative with high probability, provided that the problem is convex and a large enough number of samples is used.

\begin{theorem}
	\label{thm: high-probability_guarantees}
	Let $\epsilon,\delta\in[0,1]$. Assume that the scenario optimization \eqref{eq: scenario_problem} is convex and is attained by a solution $\theta^*\in\mathbb{R}^p$. If $N \ge \frac{2}{\epsilon}\left( \log\frac{1}{\delta} + p \right)$, then the following inequalities hold:
	\begin{enumerate}
		\item $\mathbb{P}_{\theta^*}(\mathbb{P}_X(f(X)\in h(\theta^*)) \ge 1-\epsilon) \ge 1-\delta$;
		\item $\mathbb{P}_{\theta^*}(\hat{r}(\theta^*) \le \bar{r}(\epsilon)) \ge 1-\delta$.
	\end{enumerate}
\end{theorem}
\begin{proof}
	See Appendix \ref{app: proofs}.
\end{proof}
The conclusions of Theorem \ref{thm: high-probability_guarantees} assert that, with overwhelming probability, $h(\theta^*)$ is an $\epsilon$-cover of $\mathcal{Y}$ and that the probabilistic robustness level is lower-bounded as $\hat{r}(\theta^*)\le \bar{r}(\epsilon)$. This gives high-probability guarantees for the simultaneous localization and safety certification of the output $Y=f(X)$.

In Theorem \ref{thm: high-probability_guarantees}, randomness of a solution $\theta^*$ to \eqref{eq: scenario_problem} is taken care of by the $1-\delta$ probability bound. In particular, $h(\theta^*)$ may not actually be an $\epsilon$-cover, albeit with probability at most $\delta$. For this reason, we slightly abuse terminology and call $h(\theta^*)$ the optimal $\epsilon$-cover. The additional layer of uncertainty embedded into the parameter $\delta$ is precisely the price paid for replacing the intractable chance-constrained problem \eqref{eq: chance-constrained_problem} with the tractable scenario problem \eqref{eq: scenario_problem}. However, Theorem \ref{thm: high-probability_guarantees} shows that the additional randomness is not an issue, since the requirement on $N$ scales as $\log \frac{1}{\delta}$. Therefore, we can select a small value for $\delta$ while maintaining a reasonable sample size $N$.
\begin{comment}
% Commenting the following two sentences in TCNS submission for more room.
In doing so, the scenario problem can be used in place of the chance-constrained problem to compute the maximum approximate robustness level and lower-bound the probabilistic robustness level of the neural network. The resulting robustness certificate holds with a probability that can be made arbitrarily close to one.
\end{comment}

\begin{remark}
	\label{rem: pac_learning}
	The guarantees in Theorem \ref{thm: high-probability_guarantees} are of the probably approximately correct (PAC) form. In the language of PAC learning, the surrogate output set $h(\theta^*)$ is the hypothesis of the learner, which is selected from the concept class $\mathcal{H} = \{h(\theta) : \theta\in\Theta\}$. Theorem \ref{thm: high-probability_guarantees} asserts that the hypothesis is probably approximately correct, where \emph{approximately correct} means the hypothesis (which is a set) contains the random output $Y=f(X)$ with probability at least $1-\epsilon$, and where \emph{probably} means the hypothesis (which is selected based on the specific instances $x_1,x_2,\dots,x_N$) is approximately correct (for general $X$) with probability at least $1-\delta$. Since this PAC guarantee holds whenever the scenario problem is convex, Theorem \ref{thm: convex_scenario_optimization} gives sufficient conditions for the concept class $\mathcal{H}$ to be PAC learnable, and our proposed method can be viewed as learning robustness using the framework of PAC learning.
\end{remark}

% Exploiting Network Structure.

\section{Exploiting Network Structure}
\label{sec: exploiting_network_structure}
In this section, we show how to exploit the structure of deep neural networks to reduce the time complexity of our method. The basic idea is to utilize adversarial bounds on the deep layers to replace $f$ with a shallower neural network, in effect developing a hybrid adversarial-probabilistic certification scheme. We assume that the network takes the form
	\begin{equation*}
		f = \sigma^{(K)} \circ \mathcal{A}^{(K-1)} \cdots \circ \sigma^{(1)} \circ \mathcal{A}^{(0)},
	\end{equation*}
	where $\sigma^{(k)}\colon\mathbb{R}^{n_k}\to\mathbb{R}^{n_k}$ is the $k^\text{th}$ layer's activation function and $\mathcal{A}^{(k)}\colon\mathbb{R}^{n_k}\to\mathbb{R}^{n_{k+1}}$ is the affine map given by $\mathcal{A}^{(k)}(z) = W^{(k)}z + b^{(k)}$. Note that $n_0 = n_x$ and $n_K = n_y$.

	Now, suppose that $f_L,f_U\colon\mathbb{R}^{n_x}\to\mathbb{R}^{n_y}$ are two functions satisfying
	\begin{equation*}
	f_L(x) \le f(x) \le f_U(x) ~ \text{for all $x\in\mathcal{X}$},
	\end{equation*}
	which are to be determined. Then, define the function $f'\colon\mathbb{R}^{n_x}\to\mathbb{R}^{n_y}$ by
	\begin{equation*}
		f'_i(x) = \begin{aligned}
		\begin{cases}
			(f_L(x))_i & \text{if $a_i \ge 0$}, \\
			(f_U(x))_i & \text{if $a_i < 0$},
		\end{cases}
		\end{aligned}
	\end{equation*}
	for all $i\in\{1,2,\dots,n_y\}$ and all $x\in\mathbb{R}^{n_x}$. It is immediately clear that $a^\top f'(x) + b \le a^\top f(x) + b$ for all $x\in\mathcal{X}$, so $f(x)\in\mathcal{S}$ for all $x\in\mathcal{X}$ such that $f'(x)\in\mathcal{S}$. This shows that
	\begin{equation*}
	\mathbb{P}_X(f'(X) \in\mathcal{S}) \le \mathbb{P}_X(f(X)\in\mathcal{S}).
	\end{equation*}
	Therefore, to certify the probabilistic robustness of $f$, it suffices to apply our certification procedure to the function $f'$. By bounding the deep layers' activations in $f$ by affine functions, we will reduce the problem to analyzing a simpler and shallower network $f'$ that allows for faster sampling of the outputs $y_j$. For notational simplicity, we let $\phi^{(k)} = \sigma^{(k)}\circ \mathcal{A}^{(k-1)} \circ \dots \circ \sigma^{(1)} \circ \mathcal{A}^{(0)}$ for all $k\in\{1,2,\dots,K\}$, so that $\phi^{(k)}(x)$ is the activation at layer $k$ corresponding to the input $x$. Let $\phi^{(0)}$ be the identity map on $\mathbb{R}^{n_x}$. We now recall the notion of preactivation bounds, and make two assumptions.

	\begin{definition}
	\label{def: preactivation_bounds}
	A vector $l^{(k)}\in\mathbb{R}^{n_k}$ satisfying $l^{(k)} \le \mathcal{A}^{(k-1)}\circ \phi^{(k-1)}(x)$ for all $x\in\mathcal{X}$ is called a \emph{$k^\text{th}$ layer preactivation lower bound}. A vector $u^{(k)}\in\mathbb{R}^{n_k}$ satisfying $\mathcal{A}^{(k-1)}\circ \phi^{(k-1)}(x) \le u^{(k)}$ for all $x\in\mathcal{X}$ is called a \emph{$k^\text{th}$ layer preactivation upper bound}.
	\end{definition}

	\begin{assumption}
		\label{ass: preactivation_bounds}
		For all $k\in\{1,2,\dots,K\}$, there exist $k^\text{th}$ layer preactivation lower and upper bounds $l^{(k)}$ and $u^{(k)}$, respectively.
	\end{assumption}

	\begin{assumption}
		\label{ass: affine_activation_bounds}
		For all $k\in\{1,2,\dots,K\}$, there exist functions $\mathcal{L}^{(k)},\mathcal{U}^{(k)}\colon\mathbb{R}^{n_k}\to\mathbb{R}^{n_k}$ given by
	\begin{equation*}
		\mathcal{L}^{(k)}(z) = W^{(k)}_L z + b^{(k)}_L, \quad \mathcal{U}^{(k)}(z) = W^{(k)}_U z + b^{(k)}_U,
	\end{equation*}
	that satisfy $\mathcal{L}^{(k)}(z) \le \sigma^{(k)}(z) \le \mathcal{U}^{(k)}(z)$ for all $z\in [l^{(k)},u^{(k)}]$.
	\end{assumption}

	Definition \ref{def: preactivation_bounds} and Assumptions \ref{ass: preactivation_bounds} and \ref{ass: affine_activation_bounds} are standard in the adversarial robustness literature. Notice that in many common architectures, $n_1>n_0$ and the rank of $\mathcal{A}^{(0)}$ is $n_0$, and in this case Assumption \ref{ass: preactivation_bounds} requires the input set $\mathcal{X}$ to be bounded. For most common activation functions and input sets, there exist a variety of methods for computing the above preactivation bounds and affine bounding functions---see, e.g., \cite{zhang2018efficient}.

	The following lemma transforms our affine bounds on each activation function $\sigma^{(k)}$ into affine bounds relating the activation of one layer to the activation of the next layer.

	\begin{lemma}
		\label{lem: affine_activation_bounds}
		Suppose that Assumptions \ref{ass: preactivation_bounds} and \ref{ass: affine_activation_bounds} hold. For all $k\in\{1,2,\dots,K\}$, it holds for all $x\in\mathcal{X}$ that
		\begin{equation*}
			\tilde{W}_L^{(k)} \phi^{(k-1)}(x) + \tilde{b}_L^{(k)} \le \phi^{(k)}(x) \le \tilde{W}_U^{(k)} \phi^{(k-1)}(x) + \tilde{b}_U^{(k)},
		\end{equation*}
		where
		\begin{equation}
			\begin{aligned}
				\tilde{W}_L^{(k)} = W_L^{(k)} W^{(k-1)}, &\quad \tilde{b}_L^{(k)} = W_L^{(k)} b^{(k-1)} + b_L^{(k)}, \\
				\tilde{W}_U^{(k)} = W_U^{(k)} W^{(k-1)}, &\quad \tilde{b}_U^{(k)} = W_U^{(k)} b^{(k-1)} + b_U^{(k)}.
	\end{aligned} \label{eq: tilde_parameters}
		\end{equation}
	\end{lemma}

	\begin{proof}
		See Appendix \ref{app: proofs}.
	\end{proof}

	Next, we use the affine bounds between each neighboring layer in Lemma \ref{lem: affine_activation_bounds} to develop one overall affine bound relating the activation at some layer $k^*$ to the output $\phi^{(K)}(x)$ of the neural network. The proof technique follows the idea developed in \cite{weng2018towards,zhang2018efficient}, albeit allows for more general activation functions and allows us to ``start'' the affine bounding within the interior of the neural network architecture.

	\begin{proposition}
	\label{prop: replacement}
	Suppose that Assumptions \ref{ass: preactivation_bounds} and \ref{ass: affine_activation_bounds} hold, and assume that $K\ge 3$. Let $k^*\in\{1,2,\dots,K-2\}$ and define $M=K-k^*$. Consider the matrices $\tilde{W}_L^{(k)},\tilde{W}_U^{(k)}$ and vectors $\tilde{b}_L^{(k)},\tilde{b}_U^{(k)}$ defined in \eqref{eq: tilde_parameters}. Define $E_1 = \tilde{W}_L^{(k^*+1)}$, $F_1 = \tilde{b}_L^{(k^*+1)}$, $G_1 = \tilde{W}_U^{(k^*+1)}$, and $H_1 = \tilde{b}_U^{(k^*+1)}$. Also, for $n\in\{2,3,\dots,M\}$, define
		\begin{align*}
			E_n &= \min\{0,\tilde{W}_L^{(k^*+n)}\} G_{n-1} + \max\{0,\tilde{W}_L^{(k^*+n)}\} E_{n-1}, \\
			F_n &= \min\{0,\tilde{W}_L^{(k^*+n)}\} H_{n-1} + \max\{0,\tilde{W}_L^{(k^*+n)}\} F_{n-1} + \tilde{b}_L^{(k^*+n)}, \\
			G_n &= \max\{0,\tilde{W}_U^{(k^*+n)}\} G_{n-1} + \min\{0,\tilde{W}_U^{(k^*+n)}\} E_{n-1}, \\
			H_n &= \max\{0,\tilde{W}_U^{(k^*+n)}\} H_{n-1} + \min\{0,\tilde{W}_U^{(k^*+n)}\} F_{n-1} + \tilde{b}_U^{(k^*+n)}.
		\end{align*}
	Then, for all $x\in\mathcal{X}$, it holds that
	\begin{equation*}
		E_M \phi^{(k^*)}(x) + F_M \le \phi^{(K)}(x) \le G_M \phi^{(k^*)}(x) + H_M.
	\end{equation*}
	\end{proposition}

	\begin{proof}
		See Appendix \ref{app: proofs}.
	\end{proof}

	Since $\phi^{(K)}(x) = f(x)$, Proposition \ref{prop: replacement} shows that we may take the functions $f_L,f_U$ to be $f_L=\mathcal{A}_L\circ \phi^{(k^*)}$ and $f_U=\mathcal{A}_U\circ \phi^{(k^*)}$, where $\mathcal{A}_L(z) = E_M z + F_M$ and $\mathcal{A}_U(z) = G_M z + H_M$. In this case, our function $f'$ becomes
	\begin{equation*}
		f'_i(x) = \begin{aligned}
		\begin{cases}
		\left(\mathcal{A}_L \circ \phi^{(k^*)}(x) \right)_i & \text{if $a_i \ge 0$}, \\
		\left( \mathcal{A}_U \circ \phi^{(k^*)}(x) \right)_i & \text{if $a_i < 0$}.
		\end{cases}
		\end{aligned}
	\end{equation*}
	This function $f'$ is a new neural network with the same first $k^*<K$ nonlinear layers as $f$, and with one final affine transformation. Thus, a lower bound on the probabilistic robustness level of this shallow surrogate network $f'$ is also a lower bound on the probabilistic robustness level of the deep original network $f$. 

	When $k^*$ is chosen to be small, the depth of this surrogate network is reduced, making it more efficient to sample outputs from it. As $k^*$ increases, our method incorporates more of the underlying nonlinear nature of the network $f$ into the samples that we use to assess $f$'s robustness, meaning that the robustness certificate becomes tighter, but at the expense of increased sampling time. Specifically, in the common setting where every activation $\sigma^{(k)}$ is an element-wise operator with the time complexity $O(n_k)$, the time complexity of the sampling procedure for $f$ is $O(N(n_0n_1+n_1n_2+\cdots+n_{K-1}n_K))$, whereas the time complexity for $f'$ is $O(N(n_0n_1+n_1n_2+\cdots+n_{k^*-1}n_{k^*} + n_{k^*}n_K))$. If, for example, every number $n_k$ is of order $O(n)$, then $f$ would have the sampling time complexity $O(NKn^2)$, whereas $f'$ would be of order $O(Nk^* n^2)$, giving a factor of $k^*/K$ reduction in time complexity. As we will see in Section \ref{sec: exploiting_network_structure_experiment}, this reduced time complexity is particularly helpful in deep neural network settings.

% Numerical Experiments.

\section{Numerical Experiments}
\label{sec: numerical_experiments}

\subsection{Illustrative Example}
\label{sec: illustrative_example}
	We consider the distributed linear system $x(t+1) = Ax(t) + Bu(t)$ for times $t\in\{0,1,\dots,T\}$, $T=20$, as constructed in \cite{gama2021graph}. The system has $n=10$ nodes, with a single state and input associated with every node; $x(t),u(t)\in\mathbb{R}^n$. The system and control matrices $A,B$ respect the underlying graph topology of the system, encoded by the support matrix $S$---see \cite{gama2021graph}.

	The control law is defined by a graph neural network:
	\begin{equation*}
	u(t)=\Phi(x(t),S) \coloneqq \sum_{k=0}^{K-1} h^{(2)}_{k+1} S^k \sigma\left( \sum_{j=0}^{J-1} h^{(1)}_{j+1}S^j x(t) \right),
	\end{equation*}
	with $\sigma(\cdot) = \relu(\cdot)$, $K=J=3$, $h^{(1)} = (\frac{1}{2},\frac{1}{2},\frac{1}{2})$, and $h^{(2)}=(1,1,1)$. This neural network controller, defined in terms of $S$, respects the distributed nature of the system \cite{gama2021graph}. In this experiment, we consider the case where the graph support of the control law may be randomly perturbed, so that $u(t) = \Phi(x(t),S')$ for some $S'\in\mathbb{R}^{n\times n}$ with $S'_{ij} = X S_{ij}$, where $X$ is a Bernoulli random variable equal to $1$ with probability $0.8$; the controller loses an edge in its support graph with probability $0.2$. We fix a (normal random) initial condition $x(0)\in\mathbb{R}^n$, and we consider the map $f\colon\mathbb{R}^{n\times n}\to\mathbb{R}^2$ given by $f(S') = (x_1(T),x_2(T))$, where $x(T)$ is the terminal state of the system under the control law given by $u(t)=\Phi(x(t),S')$. The safe set is defined by $\mathcal{S}_1 = \{y\in\mathbb{R}^2 : a^\top y + b\ge 0\}$, where $a=(1,0)$ and $b=0.05$. We seek to certify that the first two elements of the (random) terminal state are safe even under the perturbed control support $S'$, i.e., that $f(S')\in\mathcal{S}_1$.

	The norm ball class $\mathcal{H}$ of Examples \ref{ex: norm_ball_class}, \ref{ex: norm_ball_functions_are_affine}, and \ref{ex: scenario_optimization_with_norm_ball_class} is employed with $\|\cdot\|$ being the $\ell_2$-norm, and with probability levels $\epsilon=0.05$ and $\delta=10^{-5}$. We set $N = \left\lceil \frac{2}{\epsilon}(\log\frac{1}{\delta}+p) \right\rceil = 581$, then sample $N$ inputs $S_j'$ and compute their corresponding outputs $f(S_j')$ by running the system. As shown in Example \ref{ex: norm_ball_functions_are_affine}, $h(\cdot)$ is an affine set-valued function, and therefore $\Theta$ and $h(\cdot)$ satisfy the conditions of Theorem \ref{thm: convex_scenario_optimization}. We choose the regularizer for the scenario problem \eqref{eq: scenario_optimization_with_norm_ball_class} to be the square of the norm ball radius, i.e., $v(\bar{y},r)=r^2$. The optimization problem is convex as guaranteed by Theorem \ref{thm: convex_scenario_optimization}. We solve the scenario problem first without regularization, and then with two different levels of regularization: $\lambda_1=1$ and $\lambda_2=100$. The respective solutions are denoted by $\theta^*$, $\theta^*_{\lambda_1}$, and $\theta^*_{\lambda_2}$. Each instance takes approximately $15$ seconds to solve using CVX in \textsc{Matlab} on a standard laptop with a $\SI{2.6}{\giga\hertz}$ dual-core i5 processor. The resulting approximate robustness levels are $\hat{r}(\theta^*) = 0.0058$, $\hat{r}(\theta^*_{\lambda_1}) = 0.0054$, and $\hat{r}(\theta^*_{\lambda_2}) = -0.0061$. In the instances without regularization and with regularization level $\lambda_1$, Theorem \ref{thm: high-probability_guarantees} guarantees that the perturbed terminal state $(x_1(T),x_2(T))$ has a safety level of $0.005$ with our prescribed high probability, granting the probablistic robustness certificate we seek. On the other hand, since $\hat{r}(\theta_{\lambda_2}^*)<0$, the scenario problem using regularization level $\lambda_2$ is not able to certify the safety of the terminal state. This is due to the inherent tradeoff between localization and certification, which we now discuss.

The optimal $\epsilon$-covers $h(\theta^*)$, $h(\theta^*_{\lambda_1})$, and $h(\theta^*_{\lambda_2})$ are shown in Figure \ref{fig: control_one_halfspace}. The unregularized set $h(\theta^*)$ is massively over-conservative due to the choice $\lambda=0$, which corresponds to pure robustness certification. Indeed, $h(\theta^*)$ is the $\epsilon$-cover from our class of sets that is furthest from the boundary of the safe set, making $\hat{r}(\theta^*)$ the tightest lower bound on $\bar{r}(\epsilon)$. On the other hand, the optimal $\epsilon$-covers using $\lambda=\lambda_1$ and $\lambda=\lambda_2$ are seen to give tighter localizations of the terminal state $(x_1(T),x_2(T))$. The approximate robustness level using regularization $\lambda_1$ is only slightly lower than the unregularized value, but the regularization $\lambda_2$ is large enough to cause the approximate robustness level $\hat{r}(\theta^*_{\lambda_2})$ to become negative at the expense of localization. This shows how overemphasizing localization may harm the certification aspect of robustness assessment, and empirically demonstrates why output set estimation methods may not be adequate for issuing robustness certificates. This is explored further in Section \ref{sec: comparison_to_output_set_estimation}.

\begin{figure}[ht]
	\centering
	\includegraphics[width=0.6\linewidth]{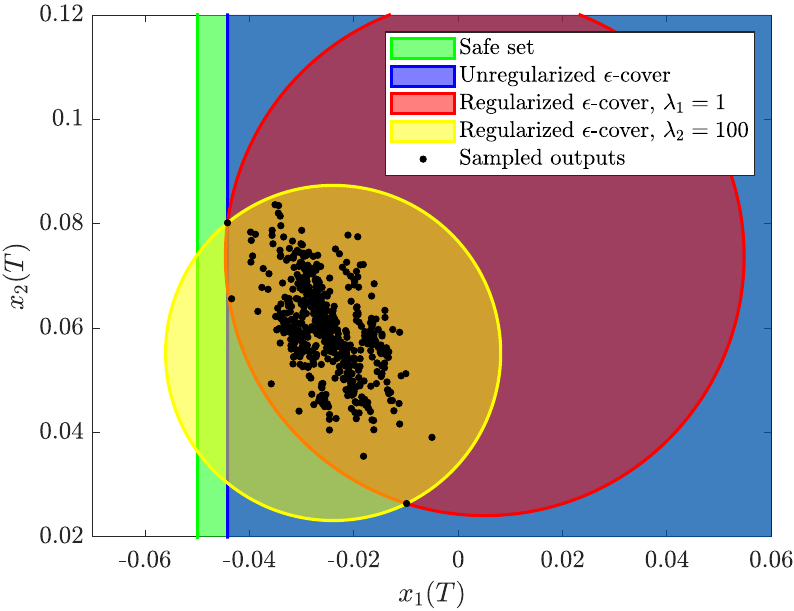}
	\caption{Optimal $\ell_2$-norm ball $\epsilon$-covers for safe set $\mathcal{S}_1$.}
	\label{fig: control_one_halfspace}
\end{figure}

We repeat the experiment with the more complicated safe set $\mathcal{S}_2 = \{y\in\mathbb{R}^2 : Ay + b \ge 0\}$, where $A = \left[\begin{smallmatrix}
	1 & 0 \\
	-1 & 0
\end{smallmatrix}\right]$ and $b = (0.05, 0)$, applying our method to each row of $\mathcal{S}_2$ individually. To do so, we set $\epsilon'=\epsilon/2$, $\delta'=\delta/2$, and $N' = \lceil \frac{2}{\epsilon'}\left(\log\frac{1}{\delta'}+p\right)\rceil = 1217$. For each of the two half-spaces defining $\mathcal{S}_2$, we solve the scenario problem using $N'$ independent and identically distributed samples, and then intersect the two resulting $\epsilon'$-covers. Doing so, we obtain an $\epsilon$-cover with probability at least $1-\delta$. We repeat this process again using regularization levels $\lambda_1=1$ and $\lambda_2=100$, and we find that each scenario problem takes approximately $30$ seconds to solve. As seen in Figure \ref{fig: control_two_halfspaces}, some samples may reside outside the resulting intersection $\epsilon$-covers---this is valid, and the robustness certificates still hold.

Again, we find robustness certificates for $\lambda=0$ and $\lambda=\lambda_1$. However, for $\lambda=\lambda_2$, the optimal $\epsilon$-covers corresponding to both half-spaces are found to intersect the unsafe region of the state space, due to the increased emphasis on localization. Interestingly, the overall localization after intersecting the two $\epsilon'$-covers for $\lambda=\lambda_2$ is in a sense looser than that of the case $\lambda=\lambda_1$, indicating that moderate regularization levels, like $\lambda_1$ in this experiment, may simultaneously perform best for both localization and certification in the case of safe sets defined by more than one half-space. Optimizing $\lambda$ in general poses an interesting problem for future research.

\begin{figure}[ht]
	\centering
	\includegraphics[width=0.6\linewidth]{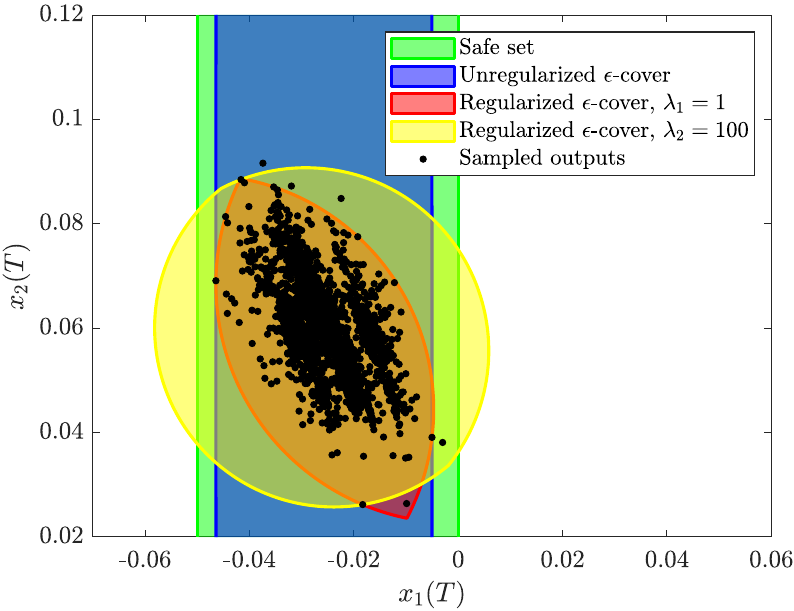}
	\caption{Optimal $\epsilon$-covers amongst intersections of two $\ell_2$-norm ball $\frac{\epsilon}{2}$-covers for safe set $\mathcal{S}_2$.}
	\label{fig: control_two_halfspaces}
\end{figure}

\subsection{Comparison to Output Set Estimation}
\label{sec: comparison_to_output_set_estimation}

In this example, we compare our proposed method to an alternate approach. In the second approach, we first estimate the output set of the neural network using the scenario-based reachability analysis in \cite{devonport2020estimating}. We then use the resulting output set estimate to assess robustness. Recall that our proposed scenario optimization \eqref{eq: scenario_problem} generalizes the reachability analysis of \cite{devonport2020estimating}. In addition to localizing the network outputs, our approach directly takes the goal of robustness certification into account, whereas the estimation technique of \cite{devonport2020estimating} does not.

To illustrate our comparison, consider a simple ReLU neural network given by $f\colon\mathbb{R}^2\to\mathbb{R}^2$, where $f_i(x) = \max\{0,x_i\}$ for $i\in\{1,2\}$. The noisy input $X$ is distributed uniformly on the input set $\mathcal{X} = \{x\in\mathbb{R}^2 : \|x-\bar{x}\|_1\le 1\}$, where $\bar{x}=(1,0)$. The safe set is given as $\mathcal{S} = \{y\in\mathbb{R}^2 : a^\top y + b \ge 0\}$, where $a=(0,1)$ and $b=0.5$. It is straightforward to show that the output set is the top-half of the input set, namely, $\mathcal{Y}=\mathcal{X}\cap\{y\in\mathbb{R}^2 : y_2\ge 0\}$. Hence, if $y\in\mathcal{Y}$ then $a^\top y + b = y_2+b\ge b\ge 0$. Therefore, $\mathcal{Y}\subseteq\mathcal{S}$, and so the random output $Y=f(X)$ is safe with probability one.
\begin{comment}
% Commenting below sentence to make room in TCNS paper.
The network is deterministically robust (and therefore has nonnegative probabilistic robustness level as well).
\end{comment}

We now perform the two assessments at hand, computing our proposed solution first. We choose the $\ell_2$-norm ball class for our candidate $\epsilon$-covers and draw sufficiently many output samples $\{y_j\}_{j=1}^N$ according to Theorem \ref{thm: high-probability_guarantees} with $\epsilon=0.1$ and $\delta=10^{-5}$. Next, we choose the regularizer $v(\bar{y},r)=r^2$ with $\lambda=0.1$ and solve the scenario problem \eqref{eq: scenario_optimization_with_norm_ball_class} for the $\ell_2$-norm ball class. The solution correctly certifies that network outputs are safe with high probability; see the blue set in Figure \ref{fig: devonport_comparison}.

We now turn to the alternative method. We use the same $\ell_2$-norm ball class as above and solve for the minimum volume $\epsilon$-cover using the same $N$ sampled outputs. The estimated output set is shown in red in Figure \ref{fig: devonport_comparison}. Despite being a tighter localization, a substantial portion of the estimated output set exits the safe set, meaning that this approach cannot certify the robustness of the network, even though the random output is truly safe with probability one. This comparison shows that a good estimate of the output set may not be the most informative set to use for assessing output safety. This observation endorses our proposed method, which simultaneously encodes both goals of certification and localization.

\begin{figure}[ht]
	\centering
	\includegraphics[width=0.6\linewidth]{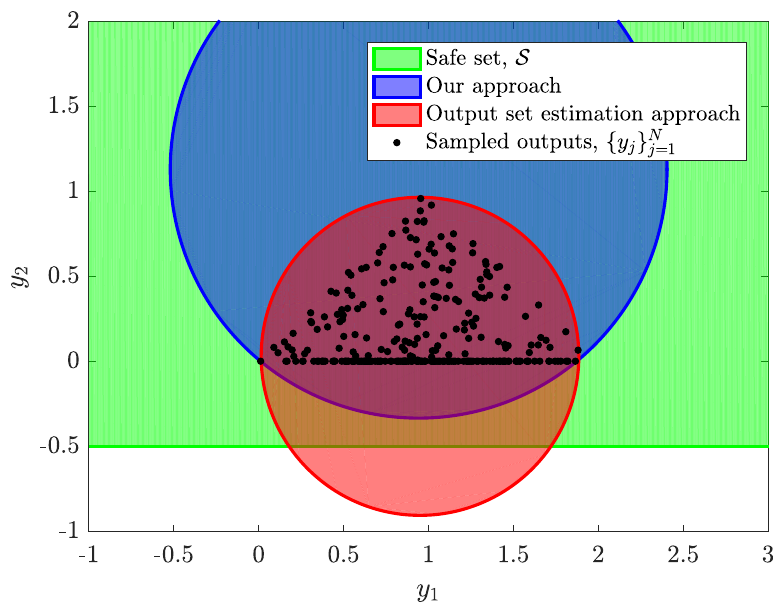}
	\caption{The tightest $\epsilon$-cover of the output set (red) does not correctly certify robustness. Our approach (blue) correctly certifies robustness and maintains reasonable localization.}
	\label{fig: devonport_comparison}
\end{figure}

\subsection{Comparison to PROVEN}
\label{sec: comparison_to_proven}

\begin{table*}[ht]
	\centering
	\caption{Average probabilistic robustness level lower bounds $\hat{r}(\theta^*)$ for MNIST ReLU networks subject to uniform noise over $\ell_\infty$-norm ball. All values are averaged over $10$ nominal inputs with randomly chosen target classes $i$. Lower bounds giving certified robustness (on average) are bolded, and the average certified adversarial radii computed using \cite{zhang2018efficient} are italicized.}
	\begin{subtable}{0.45\textwidth}
	\centering
	\caption{$2\times[20]$ network.}
	\resizebox{\textwidth}{!}{%
	\begin{tabular}{l r r r r r r}
	\toprule
	\multirow{2}{*}{Radius} & \multicolumn{2}{c}{$\epsilon=0.001$} & \multicolumn{2}{c}{$\epsilon=0.1$} & \multicolumn{2}{c}{$\epsilon=0.25$} \\
	& PRVN & Ours & PRVN & Ours & PRVN & Ours \\
	\midrule %
	\multirow{2}{*}{$0.01$} & $\bf 24.51$ & $\bf 14.11$ & $\bf 24.79$ & $\bf 14.26$ & $\bf 24.88$ & $\bf 14.28$ \\
	& $\SI{1.491}{\second}$ & $\SI{0.605}{\second}$ & $\SI{1.405}{\second}$ & $\SI{0.004}{\second}$ & $\SI{1.370}{\second}$ & $\SI{0.003}{\second}$ \\%
\multirow{2}{*}{$\mathit{0.027}$} & $\bf 14.71$ & $\bf 13.36$ & $\bf 15.45$ & $\bf 13.77$ & $\bf 15.68$ & $\bf 13.85$ \\
	& $\SI{1.434}{\second}$ & $\SI{0.599}{\second}$ & $\SI{1.540}{\second}$ & $\SI{0.004}{\second}$ & $\SI{1.427}{\second}$ & $\SI{0.003}{\second}$ \\%
	\multirow{2}{*}{$0.05$} & $-1.33$ & $\bf 12.34$ & $\bf 0.02$ & $\bf 13.11$ & $\bf 0.44$ & $\bf 13.26$ \\
	& $\SI{1.511}{\second}$ & $\SI{0.597}{\second}$ & $\SI{1.468}{\second}$ & $\SI{0.004}{\second}$ & $\SI{1.423}{\second}$ & $\SI{0.003}{\second}$ \\%
	\multirow{2}{*}{$0.1$} & $-42.09$ & $\bf 10.25$ & $-39.43$ & $\bf 11.66$ & $-38.61$ & $\bf 12.00$ \\
	& $\SI{1.485}{\second}$ & $\SI{0.623}{\second}$ & $\SI{1.437}{\second}$ & $\SI{0.004}{\second}$ & $\SI{1.437}{\second}$ & $\SI{0.002}{\second}$ \\%
	\multirow{2}{*}{$0.5$} & $-404.42$ & $-7.21$ & $-391.05$ & $-0.05$ & $-386.96$ & $\bf 1.71$ \\
	& $\SI{1.525}{\second}$ & $\SI{0.645}{\second}$ & $\SI{1.472}{\second}$ & $\SI{0.004}{\second}$ & $\SI{1.432}{\second}$ & $\SI{0.002}{\second}$ \\%
	\bottomrule
	\end{tabular}%
	}
	\label{tab: mnist_relu_2x20}
	\end{subtable}
	\hfil
	\begin{subtable}{0.45\textwidth}
	\centering
	\caption{$3\times[20]$ network.}
	\resizebox{\textwidth}{!}{%
	\begin{tabular}{l r r r r r r}
	\toprule
	\multirow{2}{*}{Radius} & \multicolumn{2}{c}{$\epsilon=0.001$} & \multicolumn{2}{c}{$\epsilon=0.1$} & \multicolumn{2}{c}{$\epsilon=0.25$} \\
	& PRVN & Ours & PRVN & Ours & PRVN & Ours \\
	\midrule %
	\multirow{2}{*}{$0.01$} & $\bf 29.24$ & $\bf 17.28$ & $\bf 29.59$ & $\bf 17.45$ & $\bf 29.70$ & $\bf 17.49$ \\
	& $\SI{1.416}{\second}$ & $\SI{0.380}{\second}$ & $\SI{1.345}{\second}$ & $\SI{0.003}{\second}$ & $\SI{1.388}{\second}$ & $\SI{0.001}{\second}$ \\%
\multirow{2}{*}{$\mathit{0.022}$} & $\bf 18.80$ & $\bf 16.65$ & $\bf 19.49$ & $\bf 17.02$ & $\bf 19.71$ & $\bf 17.10$ \\
	& $\SI{1.382}{\second}$ & $\SI{0.362}{\second}$ & $\SI{1.345}{\second}$ & $\SI{0.003}{\second}$ & $\SI{1.364}{\second}$ & $\SI{0.001}{\second}$ \\%
	\multirow{2}{*}{$0.05$} & $-22.31$ & $\bf 15.19$ & $-20.67$ & $\bf 16.00$ & $-20.17$ & $\bf 16.19$ \\
	& $\SI{1.377}{\second}$ & $\SI{0.345}{\second}$ & $\SI{1.325}{\second}$ & $\SI{0.003}{\second}$ & $\SI{1.374}{\second}$ & $\SI{0.001}{\second}$ \\%
	\multirow{2}{*}{$0.1$} & $-114.83$ & $\bf 12.57$ & $-111.59$ & $\bf 14.19$ & $-110.60$ & $\bf 14.58$ \\
	& $\SI{1.351}{\second}$ & $\SI{0.372}{\second}$ & $\SI{1.343}{\second}$ & $\SI{0.003}{\second}$ & $\SI{1.340}{\second}$ & $\SI{0.002}{\second}$ \\%
	\multirow{2}{*}{$0.5$} & $-866.28$ & $-9.36$ & $-857.82$ & $-0.55$ & $-855.22$ & $\bf 0.36$ \\
	& $\SI{1.385}{\second}$ & $\SI{0.368}{\second}$ & $\SI{1.351}{\second}$ & $\SI{0.003}{\second}$ & $\SI{1.336}{\second}$ & $\SI{0.003}{\second}$ \\%
	\bottomrule
	\end{tabular}%
	}
	\label{tab: mnist_relu_3x20}
	\end{subtable} \\
	\vspace*{\baselineskip}%
	\begin{subtable}{0.45\textwidth}
	\centering
	\caption{$2\times[1024]$ network.}
	\resizebox{\textwidth}{!}{%
	\begin{tabular}{l r r r r r r}
	\toprule
	\multirow{2}{*}{Radius} & \multicolumn{2}{c}{$\epsilon=0.001$} & \multicolumn{2}{c}{$\epsilon=0.1$} & \multicolumn{2}{c}{$\epsilon=0.25$} \\
	& PRVN & Ours & PRVN & Ours & PRVN & Ours \\
	\midrule %
	\multirow{2}{*}{$0.01$} & $\bf 51.07$ & $\bf 27.73$ & $\bf 51.40$ & $\bf 27.87$ & $\bf 51.50$ & $\bf 27.93$ \\
	& $\SI{0.899}{\second}$ & $\SI{1.102}{\second}$ & $\SI{0.877}{\second}$ & $\SI{0.008}{\second}$ & $\SI{0.874}{\second}$ & $\SI{0.004}{\second}$ \\%
\multirow{2}{*}{$\mathit{0.032}$} & $\bf 30.34$ & $\bf 26.69$ & $\bf 31.37$ & $\bf 27.13$ & $\bf 31.69$ & $\bf 27.32$ \\
	& $\SI{0.869}{\second}$ & $\SI{1.202}{\second}$ & $\SI{0.858}{\second}$ & $\SI{0.008}{\second}$ & $\SI{0.846}{\second}$ & $\SI{0.004}{\second}$ \\%
	\multirow{2}{*}{$0.05$} & $\bf 6.95$ & $\bf 25.83$ & $\bf 8.59$ & $\bf 26.53$ & $\bf 9.09$ & $\bf 26.82$ \\
	& $\SI{0.846}{\second}$ & $\SI{1.125}{\second}$ & $\SI{0.851}{\second}$ & $\SI{0.008}{\second}$ & $\SI{0.844}{\second}$ & $\SI{0.004}{\second}$ \\%
	\multirow{2}{*}{$0.1$} & $-77.53$ & $\bf 23.46$ & $-74.13$ & $\bf 24.84$ & $-73.09$ & $\bf 25.42$ \\
	& $\SI{0.861}{\second}$ & $\SI{1.137}{\second}$ & $\SI{0.854}{\second}$ & $\SI{0.008}{\second}$ & $\SI{0.881}{\second}$ & $\SI{0.004}{\second}$ \\%
	\multirow{2}{*}{$0.5$} & $-914.36$ & $\bf 4.83$ & $-900.22$ & $\bf 11.79$ & $-895.89$ & $\bf 14.45$ \\
	& $\SI{0.869}{\second}$ & $\SI{1.159}{\second}$ & $\SI{0.883}{\second}$ & $\SI{0.008}{\second}$ & $\SI{0.868}{\second}$ & $\SI{0.004}{\second}$ \\%
	\bottomrule
	\end{tabular}%
	}
	\label{tab: mnist_relu_2x1024}
	\end{subtable}
	\hfil
	\begin{subtable}{0.45\textwidth}
	\centering
	\caption{$3\times[1024]$ network.}
	\resizebox{\textwidth}{!}{%
	\begin{tabular}{l r r r r r r}
	\toprule
	\multirow{2}{*}{Radius} & \multicolumn{2}{c}{$\epsilon=0.001$} & \multicolumn{2}{c}{$\epsilon=0.1$} & \multicolumn{2}{c}{$\epsilon=0.25$} \\
	& PRVN & Ours & PRVN & Ours & PRVN & Ours \\
	\midrule %
	\multirow{2}{*}{$0.01$} & $\bf 68.87$ & $\bf 36.86$ & $\bf 69.28$ & $\bf 37.06$ & $\bf 69.41$ & $\bf 37.12$ \\
	& $\SI{2.535}{\second}$ & $\SI{1.782}{\second}$ & $\SI{2.382}{\second}$ & $\SI{0.015}{\second}$ & $\SI{2.464}{\second}$ & $\SI{0.009}{\second}$ \\%
\multirow{2}{*}{$\mathit{0.024}$} & $\bf 44.14$ & $\bf 35.97$ & $\bf 45.18$ & $\bf 36.44$ & $\bf 45.50$ & $\bf 36.58$ \\
	& $\SI{2.434}{\second}$ & $\SI{2.026}{\second}$ & $\SI{2.448}{\second}$ & $\SI{0.013}{\second}$ & $\SI{2.510}{\second}$ & $\SI{0.008}{\second}$ \\%
	\multirow{2}{*}{$0.05$} & $-111.09$ & $\bf 34.32$ & $-108.25$ & $\bf 35.32$ & $-107.39$ & $\bf 35.59$ \\
	& $\SI{2.739}{\second}$ & $\SI{2.258}{\second}$ & $\SI{2.671}{\second}$ & $\SI{0.013}{\second}$ & $\SI{2.761}{\second}$ & $\SI{0.007}{\second}$ \\%
	\multirow{2}{*}{$0.1$} & $-729.24$ & $\bf 31.10$ & $-723.45$ & $\bf 33.10$ & $-721.68$ & $\bf 33.69$ \\
	& $\SI{3.081}{\second}$ & $\SI{2.325}{\second}$ & $\SI{2.916}{\second}$ & $\SI{0.014}{\second}$ & $\SI{2.912}{\second}$ & $\SI{0.007}{\second}$ \\%
	\multirow{2}{*}{$0.5$} & $-6872.3$ & $\bf 6.89$ & $-6849.5$ & $\bf 15.85$ & $-6842.5$ & $\bf 18.56$ \\
	& $\SI{2.877}{\second}$ & $\SI{1.955}{\second}$ & $\SI{2.996}{\second}$ & $\SI{0.014}{\second}$ & $\SI{3.012}{\second}$ & $\SI{0.007}{\second}$ \\%
	\bottomrule
	\end{tabular}%
	}
	\label{tab: mnist_relu_3x1024}
	\end{subtable}
	\label{tab: compare_to_proven}
\end{table*}

\begin{table*}[ht]
	\centering
	\caption{Average probabilistic robustness level lower bounds $\hat{r}(\theta^*)$ for various other models. Values for Models $1$ and $2$ are averaged over $10$ inputs, and for Model $3$ they are averaged over $100$ network realizations. Lower bounds giving certified robustness (on average) are bolded, and the average certified adversarial radii computed using \cite{zhang2018efficient} are italicized.}
	\begin{tabular}{l r r l r r l r r}
	\toprule
	\multicolumn{3}{c}{Model $1$} & \multicolumn{3}{c}{Model $2$} & \multicolumn{3}{c}{Model $3$} \\
	Radius & PRVN & Ours & Radius & PRVN & Ours & Radius &PRVN & Ours \\
	\midrule %
	$0.005$ & $\bf 7.81$ & $\bf 19.01$ & $0.001$ & $\bf 48.80$ & $\bf 33.26$ & $0.01$ & $\bf 1.96$ & $\bf 1.97$ \\
	$\mathit{0.0068}$ & $\bf 2.20$ & $\bf 18.96$ & $\mathit{0.0023}$ & $\bf 10.90$ & $\bf 33.17$ & $0.05$ & $\bf 1.71$ & $\bf 1.79$ \\
	$0.01$ & $-26.31$ & $\bf 18.88$ & $0.003$ & $-91.20$ & $\bf 33.12$ & $0.1$ & $\bf 1.40$ & $\bf 1.56$ \\
	$0.05$ & $-1769.97$ & $\bf 17.83$ & $0.005$ & $-1056.85$ & $\bf 32.98$ & $0.5$ & $-1.06$ & $-0.29$ \\
	$0.1$ & $-4493.02$ & $\bf 16.48$ & $0.01$ & $-8717.05$ & $\bf 32.62$ & $1.0$ & $-4.13$ & $-2.60$ \\
	\bottomrule
	\end{tabular}
	\label{tab: other_models}
\end{table*}

In this experiment, we compare our approach using the half-space class $\mathcal{H} = \big\{\{y\in\mathbb{R}^{n_y} : c^\top y + d \ge 0\} : (c,d)\in\mathbb{R}^{n_y}\times\mathbb{R}\big\}$, for which we solve the scenario problem using its closed-form solution (see Appendix \ref{sec: special_case-class_of_half-spaces}), to the state-of-the-art algorithm, PROVEN \cite{weng2019proven}, for assessing robustness against random input noise. Throughout, we use open-source neural network models provided in \cite{weng2019proven}. The underlying framework of PROVEN relies on bounding a classifier's margin function by affine functions. PROVEN uses the affine functions to give closed-form bounds on the misclassification probability. We remark that, since PROVEN does not rely on sampling, their lower bound on $\bar{r}(\epsilon)$ is deterministic, whereas our bound holds with probability $1-\delta$, which is taken to be $1-10^{-5} = 0.99999$ in this experiment. The results in this section are computed using TensorFlow in Python on a standard laptop with a $\SI{2.6}{\giga\hertz}$ dual-core i5 processor.

We first consider a variety of pre-trained MNIST digit classification networks with ReLU activation functions \cite{lecun1998mnist}. A network model with $m$ hidden layers, each having $n$ neurons, is denoted by $m\times[n]$. We model the noisy input $X$ as being distributed uniformly on $\mathcal{X}=\{x\in\mathbb{R}^{n_x} : \|x-\bar{x}\|_\infty \le \epsilon_x\}$. For $10$ randomly selected nominal inputs $\bar{x}$, we compute a lower bound $\hat{r}(\theta^*)$ on the probabilistic robustness level $\bar{r}(\epsilon)$. The robustness level of a network (for a particular pair $(\epsilon,\epsilon_x)$) is evaluated by computing the average robustness level lower bound across the $10$ inputs.\footnote{Despite $\bar{r}(\epsilon)$ being an input-specific quantity, we follow the literature's standard practice and average our robustness metric over a collection of test inputs. This standard was popularized in \cite{szegedy2014intriguing}, where model robustness is evaluated using average certified input set radii. Our average robustness level lower bound immediately gives an average certified input set radius when the bound is nonnegative. In the probabilistic setting, it can be more natural to evaluate models in terms of misclassification probability, like our bounds do, instead of in terms of certified input set radii, see, e.g., \cite{zakrzewski2004randomized,webb2019statistical,fazlyab2019probabilistic,couellan2021probabilistic}.} This is done for probability levels $\epsilon\in\{0.001,0.1,0.25\}$ (with corresponding sample sizes $N\in\{25026,251,101\}$) and for a variety of noise levels $\epsilon_x$. We include the certified adversarial radius computed using \cite{zhang2018efficient}, which is a lower bound on the smallest radius such that $\mathcal{X}$ contains an input that yields an unsafe output. The targeted class $i$, which defines the margin function $g_i$ relative to the nominal input's true class $i^*$, is randomly chosen for each input tested. See Examples \ref{ex: classification_network_setup} and \ref{ex: classification_network_probabilistic_robustness_level} for more information on this application. The average lower bound values computed using our approach (denoted Ours) and PROVEN's (denoted PRVN) are shown in Table \ref{tab: compare_to_proven}.

As seen in Table \ref{tab: compare_to_proven}, our method is able to certify larger input sets than PROVEN for every network tested. Although PROVEN's lower bound is tighter for small radii, at large radii our bound is significantly tighter than PROVEN's, particularly for the larger networks in Tables \ref{tab: mnist_relu_2x1024} and \ref{tab: mnist_relu_3x1024}. This indicates that our method is especially powerful for certifying deep neural networks. The end-to-end affine bounding scheme in PROVEN tends to become looser as the network becomes deeper and as the input set becomes larger \cite{weng2019proven}. The technique comes from the adversarial robustness literature, and therefore it being embedded into PROVEN is likely the reason why PROVEN fails for radii larger than the certified adversarial radius. Our method bypasses this preliminary bound altogether. We also remark that our method certifies much larger input set radii (sometimes up to $20$ times larger) compared to the certified adversarial radii (italicized) computed using the state-of-the-art worst-case analysis \cite{zhang2018efficient}. The exact minimum adversarial radii (averaged across the $10$ inputs) for the $2\times[20]$ and $3\times[20]$ ReLU networks are efficiently computed to be around $0.07$ using mixed-integer linear programming \cite{tjeng2019evaluating}. With the tolerance $\epsilon=0.001$, our method certifies radii over $0.1$ for these networks. This evidences the claim that worst-case approaches, including exact ones, are over-conservative when applied to settings where a small amount of risk may be tolerable, in effect justifying our data-driven framework.

In Table \ref{tab: other_models}, we repeat the experiment using three variants in the neural network model. Model $1$ is an MNIST classifier with $\tanh(\cdot)$ activation functions of size $4\times[1024]$. On the other hand, Model $2$ is a CIFAR-10 network with ReLU activations of size $5\times [2048]$. We see that both Model $1$ and Model $2$ exhibit the same behavior as before; for small input set radii, the lower bounds provided by PROVEN and our method are similar and both yield high-probability robustness certificates. For larger radii, our lower bound significantly outperforms PROVEN's. Since the affine bounds in PROVEN are relatively tight for small input sets radii, we suspect the PROVEN bound to closer match our bound for large input set radii in the special case  of linear classification networks.

Model $3$ is a linear classifier, i.e., of the form $f(x)=Wx+b$, with $50$ inputs, $10$ outputs, and weights, biases, and nominal input all chosen randomly with elements uniform on $[0,1]$. We computed lower bounds on $\bar{r}(\epsilon)$ for $100$ such models and averaged the results. Table \ref{tab: other_models} shows that indeed the PROVEN bound closely matches our bound for every radius tested in this special case, and that the two methods succeed and fail to issue robustness certificates simultaneously. These results show that the worst-case bounding techniques used in the adversarial robustness literature may work satisfactorily for simple models with random inputs, such as linear classifiers, but that these bounds are too loose for general nonlinear networks.

\subsection{Exploiting Network Structure}
\label{sec: exploiting_network_structure_experiment}

	In this experiment, we implement the complexity-reducing method of Section \ref{sec: exploiting_network_structure}. We consider networks with $10$ inputs, $10$ outputs, and $250$ neurons in every hidden layer. The number of layers $K$ varies from $3$ to $25$. The weights and biases for every architecture are chosen randomly (with Gaussian elements, then normalized). Every activation function $\sigma^{(k)}$ is chosen to be ReLU, with preactivation and affine bounds derived according to \cite{weng2018towards}. We consider (randomly chosen Gaussian) clean inputs $\bar{x}$ with uniform additive random noise on the $\ell_\infty$-norm ball with radius $\epsilon_x = 0.1$, so that the noisy inputs $X$ are distributed uniformly on $\{x\in\mathbb{R}^{n_x} : \|x-\bar{x}\|_\infty\le \epsilon_x\}$.

	For every architecture, we lower-bound the probabilistic robustness level for $50$ different realizations of the weights, biases, and inputs, where for each realization we solve the scenario optimization problem using the class $\mathcal{H} = \big\{\{y\in\mathbb{R}^{n_y} : c^\top y + d \ge 0\} : (c,d)\in\mathbb{R}^{n_y}\times\mathbb{R}\big\}$ of half-spaces with $N=1000$ sampled inputs. This is done both using our baseline methodology, maintaining the full nonlinearity of each deep network, as well as using the shallow surrogate networks proposed in Section \ref{sec: exploiting_network_structure}. Figure \ref{fig: time_ratio} displays the ratio $T_{f'}/T_f$ between the sampling time $T_{f'}$ (averaged over all realizations of a given depth) for the shallow surrogate network $f'$ and the sampling time $T_f$ (again, averaged) for the deep network $f$. We see that, when $k^*=O(K)$, meaning that the majority of nonlinearity is maintained in $f'$, the sampling times remain roughly the same. On the other hand, when $k^*=O(1)$, meaning the majority of nonlinearity is replaced by affine bounds, the sampling time is reduced by nearly two orders of magnitude, and the reduction follows the expected rate of $k^*/K=O(1/K)$. For in-between surrogate architectures using $k^*=O(\log{K})$ and $k^*=O(\sqrt{K})$, we find respectable time complexity reductions, nearing an order of magnitude decrease in sampling time. The decreases in the lower bound on the probabilistic robustness level are also shown in Figure \ref{fig: time_ratio}. The average lower bound $r_f$ without exploiting structure is $0.1$. Therefore, the degradation of the bound incurred by using the shallow surrogate networks is relatively constant and minimal. The experiment results in the same conclusions when using $\tanh$ activation functions, and when using smaller and larger input set radii $\epsilon_x$.

\begin{figure}[ht]
	\centering
	\includegraphics[width=0.6\linewidth]{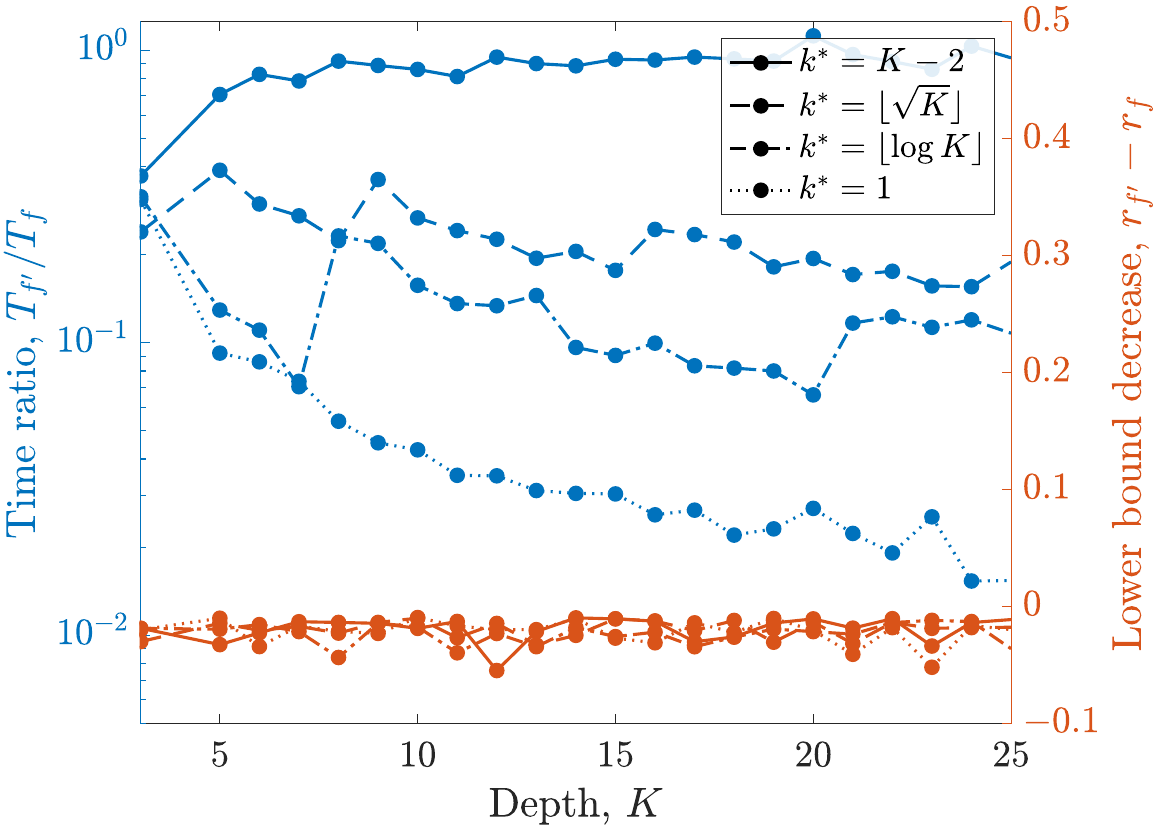}
	\caption{Ratio between the average sampling time of the shallow surrogate network $f'$ and that of the deep original network $f$, and the corresponding decrease in the lower bound on the probabilistic robustness level.}
	\label{fig: time_ratio}
\end{figure}

% Conclusions.

\section{Conclusions}
\label{sec: conclusions}
In this paper, we propose a data-driven method for certifying the robustness of neural networks against random input noise. Sufficient conditions are developed for the convexity of the resulting optimization, as well as on the number of samples to issue a high-probability guarantee for the safety of the output. The method applies to general neural networks and general input noise distributions. In cases where the activation functions can be affinely bounded, we show how to exploit the network structure to reduce sample complexity. The unified framework allows the user to balance the strength of the robustness bound with the tightness of the resulting output set estimate. Our numerical experiments show that the proposed method gives less conservative robustness bounds than the prior state-of-the-art techniques, as it is capable of certifying larger input uncertainty regions on synthetic, MNIST, and CIFAR-10 networks. In situations where neural network failure modes may exist but are unlikely and hence robustness amounts to achieving tolerable risk, these results suggest that re-tooling worst-case analysis techniques from the adversarial robustness literature results in overly conservative bounds. We conclude that taking a data-driven approach to generate probabilistic robustness guarantees, as developed in this paper, is the better option in these contexts.

%%%%% BACK MATTER

% Appendices.
\appendix

\section{Proofs}
\label{app: proofs}

In this appendix, we recall and prove the results stated in the main body of the paper.

\begin{repproposition}{prop: lower_bound_from_epsilon-cover}
	Let $\hat{\mathcal{Y}}$ be an arbitrary subset of $\mathbb{R}^{n_y}$. If $\hat{\mathcal{Y}}$ is an $\epsilon$-cover of $\mathcal{Y}=f(\mathcal{X})$, then
	\begin{equation}
		\hat{r}(\hat{\mathcal{Y}}) \le \bar{r}(\epsilon). \tag{\ref{eq: lower_bound_from_epsilon-cover}}
	\end{equation}
\end{repproposition}
\begin{proof}[Proof of Proposition \ref{prop: lower_bound_from_epsilon-cover}]
	Note that $y\in\hat{\mathcal{Y}}$ implies that $a^\top y + b \ge \hat{r}(\hat{\mathcal{Y}})$ by \eqref{eq: approximate_robustness_level}. Therefore, it holds that $\mathbb{P}_X(f(X)\in\hat{\mathcal{Y}}) \le \mathbb{P}_X(a^\top f(X) + b \ge \hat{r}(\hat{\mathcal{Y}}))$. Since $\hat{\mathcal{Y}}$ is an $\epsilon$-cover of $\mathcal{Y}$, we have that $\mathbb{P}_X(f(X)\in\hat{\mathcal{Y}}) \ge 1-\epsilon$. Hence,
	\begin{equation*}
	1-\epsilon\le\mathbb{P}_X(f(X)\in\hat{\mathcal{Y}})\le\mathbb{P}_X(a^\top f(X) + b \ge \hat{r}(\hat{\mathcal{Y}})).
	\end{equation*}
	This shows that $\hat{r}(\hat{\mathcal{Y}})$ is feasible for the optimization \eqref{eq: probabilistic_robustness_level}. Therefore, $\hat{r}(\hat{\mathcal{Y}})\le\bar{r}(\epsilon)$, as desired.
\end{proof}

\begin{reptheorem}{thm: convex_scenario_optimization}
	Consider the scenario problem \eqref{eq: scenario_problem}. Suppose that $\Theta$ takes the form
	\begin{equation*}
		\Theta = \{\theta\in\mathbb{R}^p : g_i(\theta) \le 0 ~ \text{for all $i\in\{1,2,\dots,m\}$}\},
	\end{equation*}
	where every $g_i\colon\mathbb{R}^p\to\mathbb{R}$ is convex. Furthermore, suppose that $h(\cdot)$ is a concave set-valued function that takes the form
	\begin{equation*}
		h(\theta) = \{y\in\mathbb{R}^{n_y} : h_i(y,\theta) \le 0 ~ \text{for all $i\in\{1,2,\dots,n\}$}\},
	\end{equation*}
	where $h_i\colon\mathbb{R}^{n_y}\times\mathbb{R}^p \to\mathbb{R}$ and $h_i(y,\cdot)$ is convex for all $y\in\mathbb{R}^{n_y}$. Then, \eqref{eq: scenario_problem} is a convex optimization problem.
\end{reptheorem}
\begin{proof}[Proof of Theorem \ref{thm: convex_scenario_optimization}]
	Since \eqref{eq: scenario_problem} is a maximization problem, we must show that under the assumptions on $\Theta$ and $h(\cdot)$, the objective is concave on $\Theta$ and the constraints are convex.

	Let us first consider the objective $\hat{r}(\theta) - \lambda v(\theta)$, where $\hat{r}(\theta) = \inf\{a^\top y + b : y\in h(\theta)\}$. Since
	\begin{enumerate}
		\item $g(y,\theta) \coloneqq a^\top y+b$ is jointly concave on $\mathbb{R}^{n_y}\times\Theta$;
		\item $h(\cdot)$ is a concave set-valued function on $\Theta$;
		\item and $\Theta$ is a convex set;
	\end{enumerate}
	Proposition 3.1 of \cite{fiacco1986convexity} gives that $\hat{r}(\cdot)$ is a concave function on $\Theta$. Since $v(\cdot)$ is assumed to be convex on $\Theta$ and $\lambda\ge 0$, we conclude that the objective is concave.
	
	Now, let us consider the constraints. The constraints $g_i(\theta)\le 0$ are convex, so $\theta\in\Theta$ is a convex constraint. Next, the random constraint $y_j\in h(\theta)$ is equivalent to the constraint on $\theta$ that $h_i(y_j,\theta)\le 0$ for all $i$. Since $h_i(y_j,\cdot)$ is a convex function, the constraint is convex. Since this holds for all $i\in\{1,2,\dots,n\}$ and all $j\in\{1,2,\dots,N\}$, we conclude that all of the constraints in \eqref{eq: scenario_problem} are convex.
\end{proof}

\begin{reptheorem}{thm: high-probability_guarantees}
	Let $\epsilon,\delta\in[0,1]$. Assume that the scenario optimization \eqref{eq: scenario_problem} is convex and is attained by a solution $\theta^*\in\mathbb{R}^p$. If $N \ge \frac{2}{\epsilon}\left( \log\frac{1}{\delta} + p \right)$, then the following inequalities hold:
	\begin{enumerate}
		\item $\mathbb{P}_{\theta^*}(\mathbb{P}_X(f(X)\in h(\theta^*)) \ge 1-\epsilon) \ge 1-\delta$;
		\item $\mathbb{P}_{\theta^*}(\hat{r}(\theta^*) \le \bar{r}(\epsilon)) \ge 1-\delta$.
	\end{enumerate}
\end{reptheorem}
\begin{proof}[Proof of Theorem \ref{thm: high-probability_guarantees}]
	Since the scenario problem is convex and $N\ge \frac{2}{\epsilon}\left( \log\frac{1}{\delta} + p \right)$, Theorem 1 of \cite{campi2009scenario} gives that, with probability at least $1-\delta$, the solution $\theta^*$ is feasible for the chance-constrained problem \eqref{eq: chance-constrained_problem}. Therefore, $\mathbb{P}_{\theta^*}(\mathbb{P}_X(f(X)\in h(\theta^*)) \ge 1-\epsilon) \ge 1-\delta$, which proves the first conclusion.

	To prove the second conclusion, recall the law of total probability: for an arbitrary event $A\in\mathcal{F}_{\theta^*}$ and an arbitrary partition $\{B_1,B_2\}\subseteq\mathcal{F}_{\theta^*}$ of $\Omega_{\theta^*}$ such that $\Omega_{\theta^*} = B_1\cup B_2$, $B_1\cap B_2 = \emptyset$, and $\mathbb{P}_{\theta^*}(B_i)>0$ for $i\in\{1,2\}$, we have that
	\begin{equation*}
	\mathbb{P}_{\theta^*}(A) = \mathbb{P}_{\theta^*}(A|B_1)\mathbb{P}_{\theta^*}(B_1) + \mathbb{P}_{\theta^*}(A|B_2)\mathbb{P}_{\theta^*}(B_2),
	\end{equation*}
	where $\mathbb{P}_{\theta^*}(A|B_1) = \frac{\mathbb{P}_{\theta^*}(A\cap B_1)}{\mathbb{P}_{\theta^*}(B_1)}$ denotes the probability of event $A$ conditioned on event $B_1$, and similarly for $B_2$. Choose the particular events $A = \{\omega\in\Omega_{\theta^*} : \hat{r}(\theta^*(\omega)) \le \bar{r}(\epsilon)\}$, $B_1 = \{\omega\in\Omega_{\theta^*} : \text{$h(\theta^*(\omega))$ is an $\epsilon$-cover of $\mathcal{Y}$}\}$, and $B_2 = \Omega_{\theta^*}\setminus B_1$. Then, Proposition \ref{prop: lower_bound_from_epsilon-cover} shows that  $B_1\subseteq A$, so $\mathbb{P}_{\theta^*}(A|B_1) = \frac{\mathbb{P}_{\theta^*}(B_1)}{\mathbb{P}_{\theta^*}(B_1)} = 1$. Furthermore, by the first conclusion proved above, $\mathbb{P}_{\theta^*}(B_1) = \mathbb{P}_{\theta^*}(\mathbb{P}_X(f(X)\in h(\theta^*))\ge 1-\epsilon) \ge 1-\delta$. Hence, the law of total probability gives that
	\begin{equation*}
	\mathbb{P}_{\theta^*}(A) \ge 1-\delta + \mathbb{P}_{\theta^*}(A|B_2)\mathbb{P}_{\theta^*}(B_2) \ge 1-\delta,
	\end{equation*}
	which proves the second conclusion.
\end{proof}

\begin{replemma}{lem: affine_activation_bounds}
		Suppose that Assumptions \ref{ass: preactivation_bounds} and \ref{ass: affine_activation_bounds} hold. For all $k\in\{1,2,\dots,K\}$, it holds for all $x\in\mathcal{X}$ that
		\begin{equation*}
			\tilde{W}_L^{(k)} \phi^{(k-1)}(x) + \tilde{b}_L^{(k)} \le \phi^{(k)}(x) \le \tilde{W}_U^{(k)} \phi^{(k-1)}(x) + \tilde{b}_U^{(k)},
		\end{equation*}
		where
		\begin{equation}
			\begin{aligned}
				\tilde{W}_L^{(k)} = W_L^{(k)} W^{(k-1)}, &\quad \tilde{b}_L^{(k)} = W_L^{(k)} b^{(k-1)} + b_L^{(k)}, \\
				\tilde{W}_U^{(k)} = W_U^{(k)} W^{(k-1)}, &\quad \tilde{b}_U^{(k)} = W_U^{(k)} b^{(k-1)} + b_U^{(k)}.
			\end{aligned} \tag{\ref{eq: tilde_parameters}}
		\end{equation}
	\end{replemma}
	\begin{proof}[Proof of Lemma \ref{lem: affine_activation_bounds}]
		Let $k\in\{1,2,\dots,K\}$ and let $x\in\mathcal{X}$. Define $z = \mathcal{A}^{(k-1)}\circ \phi^{(k-1)}(x)$. Then, since $z\in[l^{(k)},u^{(k)}]$, it holds that $\mathcal{L}^{(k)}(z) \le \sigma^{(k)}(z) \le \mathcal{U}^{(k)}(z)$. Expanding this inequality using the matrix-vector representation of the affine maps $\mathcal{L}^{(k)},\mathcal{U}^{(k)},\mathcal{A}^{(k-1)}$, we obtain
		\begin{equation*}
			W_L^{(k)}(W^{(k-1)}\phi^{(k-1)}(x) + b^{(k-1)}) + b_L^{(k)} \le \sigma^{(k)}(\mathcal{A}^{(k-1)} \circ \phi^{(k-1)}(x)) \le W_U^{(k)}(W^{(k-1)}\phi^{(k-1)}(x) + b^{(k-1)}) + b_U^{(k)},
		\end{equation*}
		which gives the desired result upon substituting the definitions of $\tilde{W}_L^{(k)},\tilde{W}_U^{(k)},\tilde{b}_L^{(k)},\tilde{b}_U^{(k)}$ and using the fact that $\sigma^{(k)}(\mathcal{A}^{(k-1)}\circ\phi^{(k-1)}(x)) = \phi^{(k)}(x)$.
	\end{proof}

	\begin{lemma}
		\label{lem: bound_affine_function}
		Suppose that $A_1 x_1 + B_1 \le x_2 \le C_1 x_1 + D_1$ and $A_2 x_2 + B_2 \le x_3 \le C_2 x_2 + D_2$ for vectors $x_k,B_k,D_k$ and matrices $A_k,C_k$, all of compatible dimensions. Then
	\begin{equation*}
		E_2x_1 + F_2 \le x_3 \le G_2 x_1 + H_2,
	\end{equation*}
	where
	\begin{align*}
		E_2 &= \min\{0,A_2\}C_1 + \max\{0,A_2\}A_1, \\
		F_2 &= \min\{0,A_2\}D_1 + \max\{0,A_2\}B_1 + B_2, \\
		G_2 &= \max\{0,C_2\}C_1 + \min\{0,C_2\}A_1, \\
		H_2 &= \max\{0,C_2\}D_1 + \min\{0,C_2\}B_1 + D_2.
	\end{align*}
	\end{lemma}

	\begin{proof}[Proof of Lemma \ref{lem: bound_affine_function}]
		Let $(z)_i$ denote the $i^\text{th}$ element of a vector $z$ and $(Z)_{ij}$ denote the $(i,j)^\text{th}$ element of a matrix $Z$. It holds for all indices $i$ that
	\begin{align*}
		(x_3)_i &\le (C_2 x_2 + D_2)_i \\
			&= \sum_{j} (C_2)_{ij}(x_2)_j + (D_2)_i \\
			&= \sum_{j : (C_2)_{ij} \ge 0} (C_2)_{ij}(x_2)_j + \sum_{j : (C_2)_{ij} < 0} (C_2)_{ij} (x_2)_j + (D_2)_i \\
			&\le \sum_{j : (C_2)_{ij} \ge 0} (C_2)_{ij} (C_1 x_1 + D_1)_j + \sum_{j : (C_2)_{ij}<0} (C_2)_{ij} (A_1 x_1 + B_1)_j + (D_2)_i \\
			&= \sum_j (\max\{0,(C_2)_{ij}\}(C_1 x_1 + D_1)_j + \min\{0,(C_2)_{ij}\}(A_1 x_1 + B_1)_j) + (D_2)_i \\
	    &= \big(\max\{0,C_2\}(C_1 x_1 + D_1) + \min\{0,C_2\}(A_1 x_1 + B_1) + D_2 \big)_i,
	\end{align*}
	so
	\begin{equation*}
		x_3\le (\max\{0,C_2\}C_1 + \min\{0,C_2\}A_1)x_1 + \max\{0,C_2\}D_1 + \min\{0,C_2\}B_1 + D_2 = G_2 x_1 + H_2,
	\end{equation*}
	which proves the upper bound on $x_3$.
	
	To prove the lower bound on $x_3$, note that $-x_3 \le (-A_2)x_2 + (-B_2)$, so the above analysis yields that
	\begin{align*}
		-x_3 &\le (\max\{0,-A_2\}C_1 + \min\{0,-A_2\}A_1)x_1 + \max\{0,-A_2\}D_1 + \min\{0,-A_2\}B_1 -B_2 \\
		     &= -(\min\{0,A_2\}C_1 + \max\{0,A_2\}A_1)x_1 - (\min\{0,A_2\}D_1 + \max\{0,A_2\}B_1 + B_2) \\
		     &= -E_2 x_1 - F_2,
	\end{align*}
	which concludes the proof.
	\end{proof}

	\begin{lemma}
		\label{lem: bound_affine_functions}
		Let $M\in\mathbb{N}$, $M>1$. Suppose that
		\begin{equation*}
			A_n x_n + B_n \le x_{n+1} \le C_n x_n + D_n
		\end{equation*}
		for all $n\in\{1,2,\dots,M\}$, where the vectors $x_n,B_n,D_n$ and the matrices $A_n,C_n$ are all of compatible dimensions. For all $n\in\{2,3,\dots,M\}$, define
		\begin{align*}
			E_n &= \min\{0,A_n\}G_{n-1} + \max\{0,A_n\}E_{n-1}, \\
			F_n &= \min\{0,A_n\}H_{n-1} + \max\{0,A_n\}F_{n-1} + B_n, \\
			G_n &= \max\{0,C_n\}G_{n-1} + \min\{0,C_n\}E_{n-1}, \\
			H_n &= \max\{0,C_n\}H_{n-1} + \min\{0,C_n\}F_{n-1} + D_n,
		\end{align*}
		where $E_1 = A_1$, $F_1 = B_1$, $G_1 = C_1$, and $H_1 = D_1$. Then
		\begin{equation}
			E_n x_1 + F_n \le x_{n+1} \le G_n x_1 + H_n \label{eq: bound_affine_functions}
		\end{equation}
		holds for all $n\in\{1,2,\dots,M\}$.
	\end{lemma}

	\begin{proof}[Proof of Lemma \ref{lem: bound_affine_functions}]
		The result holds for $n=1$ by assumption. We prove the result for $n\in\{2,3,\dots,M\}$ by induction on $n$. Lemma \ref{lem: bound_affine_function} shows that the result holds for the base case $n=2$. Now, suppose that the result holds for some arbitrary $n\in\{2,3,\dots,M-1\}$, so that
		\begin{equation}
			E_n x_1 + F_n \le x_{n+1} \le G_n x_1 + H_n. \label{eq: bound_affine_functions_1}
		\end{equation}
		Make the following definitions:
		\begin{gather*}
		x_1' = x_1, \quad x_2' = x_{n+1}, \quad x_3' = x_{n+2}, \\
		A_1' = E_n, \quad B_1' = F_n, \quad C_1' = G_n, \quad D_1' = H_n, \\
		A_2' = A_{n+1}, \quad B_2' = B_{n+1}, \quad C_2' = C_{n+1}, \quad D_2' = D_{n+1}.
		\end{gather*}
		Then by the assumption that $A_{n+1}x_{n+1} + B_{n+1} \le x_{n+2} \le C_{n+1} x_{n+1} + D_{n+1}$ it holds that
		\begin{equation}
			A_2'x_2' + B_2' \le x_3' \le C_2'x_2' + D_2'. \label{eq: bound_affine_functions_2}
		\end{equation}
		Also, by the induction hypothesis \eqref{eq: bound_affine_functions_1}, it holds that
		\begin{equation}
			A_1'x_1' + B_1' \le x_2' \le C_1'x_1'+D_1'. \label{eq: bound_affine_functions_3}
		\end{equation}
		Therefore, \eqref{eq: bound_affine_functions_2} and \eqref{eq: bound_affine_functions_3} together with Lemma \ref{lem: bound_affine_function} give that
		\begin{equation}
			E_2'x_1' + F_2' \le x_3' \le G_2' x_1' + H_2', \label{eq: bound_affine_functions_4}
		\end{equation}
		where
		\begin{align*}
		E_2' &= \min\{0,A_2'\}C_1' + \max\{0,A_2'\}A_1', \\
		F_2' &= \min\{0,A_2'\}D_1' + \max\{0,A_2'\}B_1' + B_2', \\
		G_2' &= \max\{0,C_2'\}C_1' + \min\{0,C_2'\}A_1', \\
		H_2' &= \max\{0,C_2'\}D_1' + \min\{0,C_2'\}B_1' + D_2'.
	\end{align*}
	Substituting our earlier definitions for these values gives that $E_2' = E_{n+1}$, $F_2' = F_{n+1}$, $G_2' = G_{n+1}$, and $H_2' = H_{n+1}$, and therefore in light of the fact that $x_1'=x_1$ and $x_3'=x_{n+2}$, \eqref{eq: bound_affine_functions_4} becomes
	\begin{equation*}
		E_{n+1}x_1 + F_{n+1} \le x_{n+2} \le G_{n+1}x_1 + H_{n+1},
	\end{equation*}
	so the induction step has been proven. Thus, the result \eqref{eq: bound_affine_functions} holds for all $n\in\{1,2,\dots,M\}$.
	\end{proof}

	\begin{repproposition}{prop: replacement}
	Suppose that Assumptions \ref{ass: preactivation_bounds} and \ref{ass: affine_activation_bounds} hold, and assume that $K\ge 3$. Let $k^*\in\{1,2,\dots,K-2\}$ and define $M=K-k^*$. Consider the matrices $\tilde{W}_L^{(k)},\tilde{W}_U^{(k)}$ and vectors $\tilde{b}_L^{(k)},\tilde{b}_U^{(k)}$ defined in \eqref{eq: tilde_parameters}. Define $E_1 = \tilde{W}_L^{(k^*+1)}$, $F_1 = \tilde{b}_L^{(k^*+1)}$, $G_1 = \tilde{W}_U^{(k^*+1)}$, and $H_1 = \tilde{b}_U^{(k^*+1)}$. Also, for $n\in\{2,3,\dots,M\}$, define
		\begin{align*}
			E_n &= \min\{0,\tilde{W}_L^{(k^*+n)}\} G_{n-1} + \max\{0,\tilde{W}_L^{(k^*+n)}\} E_{n-1}, \\
			F_n &= \min\{0,\tilde{W}_L^{(k^*+n)}\} H_{n-1} + \max\{0,\tilde{W}_L^{(k^*+n)}\} F_{n-1} + \tilde{b}_L^{(k^*+n)}, \\
			G_n &= \max\{0,\tilde{W}_U^{(k^*+n)}\} G_{n-1} + \min\{0,\tilde{W}_U^{(k^*+n)}\} E_{n-1}, \\
			H_n &= \max\{0,\tilde{W}_U^{(k^*+n)}\} H_{n-1} + \min\{0,\tilde{W}_U^{(k^*+n)}\} F_{n-1} + \tilde{b}_U^{(k^*+n)}.
		\end{align*}
	Then, for all $x\in\mathcal{X}$, it holds that
	\begin{equation*}
		E_M \phi^{(k^*)}(x) + F_M \le \phi^{(K)}(x) \le G_M \phi^{(k^*)}(x) + H_M.
	\end{equation*}
	\end{repproposition}

	\begin{proof}[Proof of Proposition \ref{prop: replacement}]
		Let $x\in\mathcal{X}$. Then, by Lemma \ref{lem: affine_activation_bounds}, it holds that
		\begin{equation}
			\tilde{W}_L^{(k)}\phi^{(k-1)}(x) + \tilde{b}_L^{(k)} \le \phi^{(k)}(x) \le \tilde{W}_U^{(k-1)}\phi^{(k-1)}(x) + \tilde{b}_U^{(k)} \label{eq: replacement_1}
		\end{equation}
		for all $k\in\{1,2,\dots,K\}$. For all $n\in\{1,2,\dots,M+1\}$, define
		\begin{equation*}
		x_n = \phi^{(k^*+n-1)}(x).
		\end{equation*}
		Also, for all $n\in\{1,2,\dots,M\}$, define
		\begin{equation*}
			A_n = \tilde{W}_L^{(k^*+n)}, \quad B_n = \tilde{b}_L^{(k^*+n)}, \quad C_n = \tilde{W}_U^{(k^*+n)}, \quad D_n = \tilde{b}_U^{(k^*+n)}.
		\end{equation*}
		Then it holds that $E_1 = A_1$, $F_1=B_1$, $G_1=C_1$, $H_1=D_1$, and
		\begin{align*}
			E_n &= \min\{0,A_n\}G_{n-1} + \max\{0,A_n\}E_{n-1}, \\
			F_n &= \min\{0,A_n\}H_{n-1} + \max\{0,A_n\}F_{n-1} + B_n, \\
			G_n &= \max\{0,C_n\}G_{n-1} + \min\{0,C_n\}E_{n-1}, \\
			H_n &= \max\{0,C_n\}H_{n-1} + \min\{0,C_n\}F_{n-1} + D_n,
		\end{align*}
		for $n\in\{2,3,\dots,M\}$. Also, \eqref{eq: replacement_1} gives that
		\begin{equation*}
			A_n x_n + B_n \le x_{n+1} \le C_n x_n + D_n
		\end{equation*}
		for all $n\in\{1,2,\dots,M\}$, so by Lemma \ref{lem: bound_affine_functions}, we conclude that
		\begin{equation*}
		E_n x_1 + F_n \le x_{n+1} \le G_n x_1 + H_n
		\end{equation*}
		for all $n\in\{1,2,\dots,M\}$. In particular, for $n=M$, this yields the following bound on $x_{M+1}=\phi^{(K)}(x)$ in terms of $x_1 = \phi^{(k^*)}(x)$:
		\begin{equation*}
			E_M \phi^{(k^*)}(x) + F_M \le \phi^{(K)}(x) \le G_M \phi^{(k^*)}(x) + H_M,
		\end{equation*}
		which is the desired result.
	\end{proof}

\section{Extension to General Polyhedral Safe Sets}
\label{sec: extension_to_general_polyhedral_safe_sets}
In this section, we explicitly walk through the steps of generalizing our proposed assessment method to the case where the safe set is a general polyhedral set defined by the intersection of finitely many half-spaces.

Consider the polyhedral safe set $\mathcal{S} = \{y\in\mathbb{R}^{n_y} : Ay + b\ge 0\}$, where $A\in\mathbb{R}^{n_s\times n_y}$ and $b\in\mathbb{R}^{n_s}$. Denote the $i^\text{th}$ row of $A$ by $a_i^\top$ and the $i^\text{th}$ element of $b$ by $b_i$. In this setting, the condition $y\in\mathcal{S}$ is equivalent to $\min_{i\in\{1,2,\dots,n_s\}} a_i^\top y + b_i \ge 0$. Therefore, the deterministic robustness level is naturally formulated as
\begin{equation*}
	r^* = \inf_{y\in\mathcal{Y}} \min_{i\in\{1,2,\dots,n_s\}} a_i^\top y + b_i,
\end{equation*}
so that $r^* \ge 0$ certifies that $Y=f(X)$ is safe with probability one. Then the approximate robustness level using a surrogate output set $\hat{\mathcal{Y}}$ becomes
\begin{equation*}
	\hat{r}(\hat{\mathcal{Y}}) = \inf_{y\in\hat{\mathcal{Y}}} \min_{i\in\{1,2,\dots,n_s\}} a_i^\top y + b_i.
\end{equation*}
Moreover, the condition that $f(X)$ has safety level at least $r$ with high probability is naturally encoded in the following probabilistic robustness level:
\begin{equation*}
	\bar{r}(\epsilon) = \sup\left\{r\in\mathbb{R} : \mathbb{P}_X\left(\min_{i\in\{1,2,\dots,n_s\}} a_i^\top f(X) + b_i \ge r\right) \ge 1-\epsilon\right\}.
\end{equation*}

With our robustness levels defined for the general polyhedral safe set, we now outline the procedure to generalize our main results from the single half-space case presented in the paper. To this end, start by prescribing probability levels $\epsilon,\delta\in[0,1]$ close to zero, and define $\epsilon'=\epsilon/n_s$ and $\delta' = \delta / n_s$. Then, for all $i\in\{1,2,\dots,n_s\}$, perform the proposed assessment method for the single half-space setting using the parameters $\epsilon'$, $\delta'$, $a_i$, and $b_i$ in place of $\epsilon$, $\delta$, $a$, and $b$, respectively. In particular, for every $i$, use $N' \ge \frac{2}{\epsilon'}\left( \log\frac{1}{\delta'} + p\right)$ independent and identically distributed samples in the scenario problem \eqref{eq: scenario_problem}, ensuring that the samples across different values of $i$ are also independent. Notice that the sample size $N'$ grows with $n_s$ like $n_s\log(n_s)$, so the increase in the number of samples used for each scenario problem is modest, as it is nearly linear. However, the total number of samples needed across all $n_s$ scenario problems grows with $n_s$ like $n_s^2\log(n_s)$, and therefore the computational cost may become prohibitive in the case the safe set is defined by a large number of half-spaces. To remedy this, one may first compute a polyhedral inner-approximation of $\mathcal{S}$ with a much smaller number of half-spaces, and then apply the methods outlined in this section.

Now, let $\theta^*_i$ denote the solution to the scenario problem \eqref{eq: scenario_problem} corresponding to row $i$ of the safe set. Remark that the solutions $\theta_i^*$ are all random, although they are not necessarily defined on the same probability space, as their distributions depend on the particular values for $a_i$ and $b_i$ used to compute them. For notational convenience, denote the probability distribution of $\theta_i^*$ by $\mathbb{P}_i$, and denote by $\mathbb{P}$ the product probability measure associated with $(\theta_1^*,\theta_2^*,\dots,\theta_{n_s}^*)$. Then, Theorem \ref{thm: high-probability_guarantees} gives for all $i$ that, with probability at least $1-\delta'$, the set $h(\theta^*_i)$ is an $\epsilon'$-cover of $\mathcal{Y}=f(\mathcal{X})$.
That is,
\begin{equation*}
	\mathbb{P}_i(\mathbb{P}_X(f(X)\in h(\theta_i^*))\ge 1-\epsilon') \ge 1-\delta'.
\end{equation*}
Note that if $\hat{\mathcal{Y}}_i$ are $\frac{\epsilon}{n_c}$-covers of $\mathcal{Y}$ for all $i\in\{1,2,\dots,n_c\}$, then $\bigcap_{i=1}^{n_c} \hat{\mathcal{Y}}_i$ is an $\epsilon$-cover, since
\begin{align*}
\mathbb{P}_X\left(f(X)\in \bigcap_{i=1}^{n_c}\hat{\mathcal{Y}}_i\right) &= 1 - \mathbb{P}_X\left( f(X) \in \bigcup_{i=1}^{n_c} \hat{\mathcal{Y}}_i^c \right) \\
									&\ge 1 - \sum_{i=1}^{n_c} \mathbb{P}_X(f(X)\in \hat{\mathcal{Y}}_i^c) \\
									&= 1 - \sum_{i=1}^{n_c} (1-\mathbb{P}_X(f(X)\in\hat{\mathcal{Y}}_i)) \\
									&\ge 1-\sum_{i=1}^{n_c}\frac{\epsilon}{n_c} \\
									&=1-\epsilon,
\end{align*}
where $\hat{\mathcal{Y}}_i^c$ denotes the complement $\mathbb{R}^{n_y}\setminus \hat{\mathcal{Y}}_i$. Using the monotonicity and subadditivity of the product measure $\mathbb{P}$, we can apply this result to the sets $h(\theta_i^*)$ to find that
\begin{align*}
	\mathbb{P}\left( \mathbb{P}_X\left( f(X)\in \bigcap_{i=1}^{n_s} h(\theta_i^*) \right) \ge 1-\epsilon \right) &\ge \mathbb{P}( \mathbb{P}_X\left( f(X)\in h(\theta_i^*) \right) \ge 1-\epsilon' ~ \text{for all $i$} ) \\
									&= 1-\mathbb{P}(\mathbb{P}_X(f(X)\in h(\theta_i^*))<1-\epsilon' ~ \text{for some $i$}) \\
									&\ge 1 - \sum_{i=1}^{n_s}\mathbb{P}_i(\mathbb{P}_X(f(X)\in h(\theta_i^*))<1-\epsilon')\\
									&= 1 - \sum_{i=1}^{n_s}(1-\mathbb{P}_i(\mathbb{P}_X(f(X)\in h(\theta_i^*))\ge 1-\epsilon')) \\
									&\ge 1-\sum_{i=1}^{n_s}\delta' \\
									&= 1-\delta.
\end{align*}
Therefore, we conclude that $\bigcap_{i=1}^{n_s} h(\theta_i^*)$ is an $\epsilon$-cover of $\mathcal{Y}$ with probability at least $1-\delta$.

Note that every individual $h(\theta_i^*)$ is an $\epsilon$-cover of $\mathcal{Y}$ with probability at least $1-\delta$ as well, since it is an $\epsilon'$-cover with probability at least $1-\delta'\ge 1-\delta$ by construction, and an $\epsilon'$-cover is certainly an $\epsilon$-cover since $\epsilon'\le \epsilon$. However, it is important to remark that the intersection $\bigcap_{i=1}^{n_s} h(\theta_i^*)$ is clearly a tighter $\epsilon$-cover than any of the individual covers $h(\theta_i^*)$, and therefore it gives better probabilistic localization of the output. Furthermore, recall that the $\epsilon$-cover is used as a surrogate output set to compute the approximate robustness level in order to lower-bound the probabilistic robustness level. Therefore, we'd like to choose the $\epsilon$-cover so that the approximate robustness level is maximal. Since
\begin{align*}
	\hat{r}\left( \bigcap_{i=1}^{n_s} h(\theta_i^*) \right) &= \inf\left\{ s(y) : y\in \bigcap_{i=1}^{n_s} h(\theta_i^*) \right\} \\
								&\ge \inf \bigcap_{i=1}^{n_s} \{s(y) : y\in h(\theta_i^*)\} \\
								&\ge \max_{i\in\{1,2,\dots,n_s\}} \inf\{s(y) : y\in h(\theta_i^*)\} \\
								&= \max_{i\in\{1,2,\dots,n_s\}} \hat{r}(h(\theta_i^*)),
\end{align*}
where $s(y) = \min_{i\in\{1,2,\dots,n_s\}} a_i^\top y + b_i$ is the safety level of $y$ with respect to the general polyhedral safe set $\mathcal{S}$, it is clear that using the intersection $\bigcap_{i=1}^{n_s} h(\theta_i^*)$ will give a tighter bound on the probabilistic robustness level than any of the individual covers $h(\theta_i^*)$.

Since we know that $\bigcap_{i=1}^{n_s} h(\theta_i^*)$ is an $\epsilon$-cover with probability $1-\delta$, the only result that remains to be generalized is the second conclusion from Theorem \ref{thm: high-probability_guarantees}, i.e., we want to formally guarantee that $\hat{r}\left( \bigcap_{i=1}^{n_s} h(\theta_j^*) \right) \le \bar{r}(\epsilon)$ with probability $1-\delta$. This follows readily from the fact that $\mathbb{P}\left( \mathbb{P}_X\left( f(X)\in \bigcap_{i=1}^{n_s} h(\theta_i^*) \right) \ge 1-\epsilon \right) \ge 1-\delta$ together with Proposition \ref{prop: lower_bound_from_epsilon-cover} and the law of total probability, just as in the proof of Theorem \ref{thm: high-probability_guarantees} in Appendix \ref{app: proofs}.

Finally, note that when the class $\mathcal{H}$ of surrogate output sets is designed so that $h(\theta)$ is convex for all $\theta\in \Theta$, the value $\hat{r}\left( \bigcap_{i=1}^{n_s}h(\theta_i^*) \right) = \inf_{y\in\bigcap_{j=1}^{n_s} h(\theta_j^*)} \min_{i\in\{1,2,\dots,n_s\}} a_i^\top y + b_i = \min_{i\in\{1,2,\dots,n_s\}} \inf_{y\in\bigcap_{j=1}^{n_s} h(\theta_j^*)} a_i^\top y + b_i$ is easily computable, as it involves $n_s$ minimizations of affine functions over the convex set $\bigcap_{i=1}^{n_s} h(\theta_i^*)$. This completes the generalization of our assessment method to the case with a general polyhedral safe set.

\section{Distributionally Robust Extension}
\label{sec: distributionally_robust_extension}

In this section, we formulate a distributionally robust variant of the proposed assessment procedure. Consider the case where the neural network input $X$ has a finite number of known possible probability distributions $\mathbb{P}_1,\mathbb{P}_2,\dots,\mathbb{P}_q$, but that at any point in time, the true input distribution $\mathbb{P}_X$ is unknown. For simplicity, we use the notation $\mathbb{P}_k(P(X))$, where $P$ is a mathematical predicate, to mean the probability of the event $P(X)$ when $X$ is distributed according to $\mathbb{P}_k$. Naturally, we formulate the following distributionally robust variant to the chance-constrained problem \eqref{eq: chance-constrained_problem}:
\begin{equation*}
	\begin{aligned}
	& \underset{\theta\in\Theta}{\text{maximize}} && \hat{r}(\theta) - \lambda v(\theta) \\
	& \text{subject to} && \min_{k\in\{1,2,\dots,q\}} \mathbb{P}_k(f(X)\in h(\theta)) \ge 1-\epsilon.
	\end{aligned}
\end{equation*}
As we did before, we consider a scenario-based approximation to the above chance-constrained problem. However, instead of directly analyzing the above problem, consider treating each distribution separately. That is, let $\delta' = \delta/q$, and for all $k\in\{1,2,\dots,q\}$, formulate the scenario problem
\begin{equation*}
	\begin{aligned}
	& \underset{\theta\in\Theta}{\text{maximize}} && \hat{r}(\theta) - \lambda v(\theta) \\
	& \text{subject to} && y_{j,k}\in h(\theta) ~ \text{for all $j\in\{1,2,\dots,N'\}$},
	\end{aligned}
\end{equation*}
where we take the sample size to be $N' = \lceil \frac{2}{\epsilon}\left( \log\frac{1}{\delta'} + p \right)\rceil$, the samples $x_{1,k},x_{2,k},\dots,x_{N',k}$ are drawn independently and identically from $\mathbb{P}_k$, and $y_{j,k} = f(x_{j,k})$. Denote the solution to the $k^\text{th}$ such scenario problem as $\theta^*_k$ and its associated probability distribution as $\mathbb{P}_{\theta^*_k}$. Also, denote by $\mathbb{P}_{\theta^*}$ the product probability measure associated with $(\theta_1^*,\theta_2^*,\dots,\theta_q^*)$. Then by Theorem \ref{thm: high-probability_guarantees}, we have for all $k$ that $\mathbb{P}_{\theta^*_k}(\mathbb{P}_k(f(X)\in h(\theta^*_k))\ge 1-\epsilon) \ge 1-\delta'$. Therefore, performing a similar line of analysis as in Appendix \ref{sec: extension_to_general_polyhedral_safe_sets}, we find that
\begin{align*}
	\mathbb{P}_{\theta^*}\left( \mathbb{P}_X\left( f(X) \in \bigcup_{k=1}^{q} h(\theta_k^*) \right) \ge 1-\epsilon \right) & \ge \mathbb{P}_{\theta^*}\left( \mathbb{P}_j\left( f(X)\in \bigcup_{k=1}^q h(\theta_k^*) \right) \ge 1-\epsilon ~ \text{for all $j$} \right) \\
	&\ge \mathbb{P}_{\theta^*}\left( \mathbb{P}_j(f(X)\in h(\theta_j^*)) \ge 1-\epsilon ~ \text{for all $j$} \right) \\
	&= 1-\mathbb{P}_{\theta^*}\left( \mathbb{P}_j(f(X)\in h(\theta_j^*)) <1-\epsilon ~ \text{for some $j$} \right) \\
	&\ge 1-\sum_{j=1}^q \mathbb{P}_{\theta_j^*}\left(\mathbb{P}_j(f(X)\in h(\theta_j^*)) < 1-\epsilon\right) \\
	&\ge 1-\sum_{j=1}^q \delta' \\
	&= 1-\delta.
\end{align*}
Therefore, the set $\bigcup_{k=1}^q h(\theta_k^*)$ is an $\epsilon$-cover of $\mathcal{Y}$ with probability at least $1-\delta$. Note that this distributionally robust approach naturally leads to the union of precomputed covers, in contrast to the intersection found in Appendix \ref{sec: extension_to_general_polyhedral_safe_sets}. This is to be expected, since each set $h(\theta_k^*)$ in the current discussion brings new information about where outputs could be located when the input is distributed according to $\mathbb{P}_k$, and this new information should be included in the final $\epsilon$-cover so as to ensure good localization of the output in a distributionally robust sense. In contrast, since all of the samples in Appendix \ref{sec: extension_to_general_polyhedral_safe_sets} come from the same distribution, it is reasonable to intersect the resulting output set estimates and still obtain a good estimate of the true output set $\mathcal{Y}$ with high-probability localization guarantees.

Now, following the same analysis as in Appendix \ref{sec: extension_to_general_polyhedral_safe_sets}, it is easy to see that, with probability $1-\delta$, the probabilistic robustness level is lower-bounded as $\hat{r}\left( \bigcup_{k=1}^q h(\theta_k^*) \right) \le \bar{r}(\epsilon)$, where $\hat{r}(\cdot)$ and $\bar{r}(\cdot)$ are as defined in \eqref{eq: approximate_robustness_level} and \eqref{eq: probabilistic_robustness_level}, respectively. Finally, note that computing $\hat{r}\left( \bigcup_{k=1}^q h(\theta_k^*) \right)$ is simple, since
\begin{align*}
	\hat{r}\left(\bigcup_{k=1}^q h(\theta_k^*) \right) &= \inf \left\{a^\top y + b : y\in\bigcup_{k=1}^q h(\theta_k^*) \right\} \\
							   &= \inf \bigcup_{k=1}^q \{a^\top y + b : y\in h(\theta^*_k)\} \\
							   &= \min_{k\in\{1,2,\dots,q\}} \inf\{a^\top y + b : y\in h(\theta_k^*)\} \\
							   &= \min_{k\in\{1,2,\dots,q\}} \hat{r}(\theta_k^*),
\end{align*}
and the values $\hat{r}(\theta_k^*)$ have already been computed using convex optimization.

\section{Special Case: Class of Half-Spaces}
\label{sec: special_case-class_of_half-spaces}

In this section, we consider the special case of the scenario problem \eqref{eq: scenario_problem} where $\lambda=0$, $\Theta = \mathbb{R}^{n_y}\times\mathbb{R}$, and $h\colon\Theta\to\mathcal{P}(\mathbb{R}^{n_y})$ is given by $h(c,d) = \{y\in\mathbb{R}^{n_y} : c^\top y + d \ge 0\}$. Then $\mathcal{H}$ is the class of all half-spaces within the output space $\mathbb{R}^{n_y}$. We will show that, 1) the scenario problem has a closed-form solution, and 2) the scenario problem coincides with the optimization obtained by applying the scenario approach directly to the definition of $\bar{r}(\epsilon)$.

Under the given conditions, the approximate robustness level becomes $\hat{r}(c,d) = \inf\{a^\top y + b : c^\top y + d \ge 0\}$. The Lagrangian for this minimization problem is
\begin{align*}
	L(y,\mu) = a^\top y + b - \mu(c^\top y + d) = (a-\mu c)^\top y + b-\mu d,
\end{align*}
where $\mu\ge 0$ denotes the Lagrange multiplier. Since the Lagrangian is affine in $y$, the dual function is
\begin{equation*}
	g(\mu) = \inf_{y\in\mathbb{R}^{n_y}} L(y,\mu) = \begin{aligned}
	\begin{cases}
	b-\mu d & \text{if $a=\mu c$}, \\
	-\infty & \text{otherwise}.
	\end{cases}
	\end{aligned}
\end{equation*}
Therefore, the dual problem corresponding to the primal minimization over $y$ becomes
\begin{equation*}
	\begin{aligned}
	& \underset{\mu\in\mathbb{R}}{\text{maximize}} && b-\mu d \\
	& \text{subject to} && a=\mu c, ~ \mu\ge 0.
	\end{aligned}
\end{equation*}
Now, since the primal problem over $y$ is a feasible linear program, we have that strong duality holds \cite{boyd2004convex}. Hence, $\hat{r}(c,d) = \sup\{b-\mu d : a=\mu c, ~ \mu\ge 0\}$. Therefore, the scenario optimization problem \eqref{eq: scenario_problem} reduces to
\begin{equation}
	\begin{aligned}
	& \underset{c\in\mathbb{R}^{n_y}, ~ d,\mu\in\mathbb{R}}{\text{maximize}} && b-\mu d \\
	& \text{subject to} && a=\mu c, ~ \mu \ge 0, \\
	&&& c^\top y_j + d \ge 0 ~ \text{for all $j\in\{1,2,\dots,N\}$}.
	\end{aligned} \label{eq: half-space_scenario_problem}
\end{equation}

We now solve the scenario problem \eqref{eq: half-space_scenario_problem} in closed form. First, under the assumption that the safe set is nontrivial, i.e., $a\ne 0$, we remark that the constraint $a=\mu c$ implies that $\mu\ne 0$, and therefore can be rewritten as $c = \frac{1}{\mu} a$. Eliminating $c$, the optimization becomes
\begin{equation*}
	\begin{aligned}
	& \underset{d,\mu\in\mathbb{R}}{\text{maximize}} && b-\mu d \\
	& \text{subject to} && \mu > 0, ~ \frac{1}{\mu} a^\top y_j + d \ge 0, ~j\in\{1,2,\dots,N\}.
	\end{aligned}
\end{equation*}
Defining $\tilde{r} = b-\mu d$, the problem further reduces to
\begin{equation}
	\begin{aligned}
	& \underset{\tilde{r}\in\mathbb{R}}{\text{maximize}} && \tilde{r} \\
	& \text{subject to} && a^\top y_j + b \ge \tilde{r} ~ \text{for all $j\in\{1,2,\dots,N\}$}.
	\end{aligned} \label{eq: half-space_scenario_problem_reduced}
\end{equation}
It is clear that the reduced scenario problem \eqref{eq: half-space_scenario_problem_reduced} matches the formulation obtained by directly applying the scenario approach to estimate $\bar{r}(\epsilon) = \sup\{r\in\mathbb{R} : \mathbb{P}_X(a^\top f(X) + b \ge r) \ge 1-\epsilon\}$. In fact, since the optimization defining $\bar{r}(\epsilon)$ is univariate, whereas the scenario problem \eqref{eq: half-space_scenario_problem} over $\Theta$ is $(n_y+1)$-dimensional, the number of samples indicated by Theorem \ref{thm: high-probability_guarantees} is conservative for this problem. Instead of $\frac{2}{\epsilon}(\log\frac{1}{\delta}+n_y+1)$ samples, only $N\ge \frac{2}{\epsilon}(\log\frac{1}{\delta}+1)$ samples are needed to obtain the high-probability guarantees provided by Theorem \ref{thm: high-probability_guarantees}. We also note that the optimization \eqref{eq: half-space_scenario_problem_reduced} is a univariate linear program, and is clearly solved in closed-form by $\tilde{r}^* = \min_{j\in\{1,2,\dots,N\}} a^\top y_j + b$. This derivation results in the following proposition:
\begin{proposition}
	Let $\epsilon,\delta\in[0,1]$, $N\ge \frac{2}{\epsilon}(\log\frac{1}{\delta}+1)$, and $\{x_j : j\in\{1,2,\dots,N\}\}$ be a set of $N$ independently and identically distributed samples drawn from $\mathbb{P}_X$. Let $y_j = f(x_j)$ for all $j\in\{1,2,\dots,N\}$. Then with probability $1-\delta$, the probabilistic robustness level $\bar{r}(\epsilon)$ is lower-bounded by $\tilde{r}^* = \min_{j\in\{1,2,\dots,N\}} a^\top y_j + b$.
\end{proposition}

Despite being derived from our general framework, this special case reduces to a solution that is remarkably simple and coincides with a heuristic one may first try using in practice. That is, upon choosing $\mathcal{H}$ to be the class of half-spaces, the optimal sample-based method for lower-bounding the probabilistic robustness level via the approximate robustness level is to compute the minimum safety level amongst the collection of sampled outputs. If sufficiently many samples are used and the minimum safety level is nonnegative, then we certify with high probability that the unknown random output $Y=f(X)$ is safe in practice. Although this special case does not yield meaningful localization of the outputs and uses a crude class of surrogate output sets, it certainly provides a fast analytical method for certifying the network's robustness against random input noise. Furthermore, our derivation mathematically justifies the use of this otherwise heuristic method, and, contrarily, the natural intuition behind this statistical estimator validates our framework that generalizes it.

\section{Alternate Illustrative Example}
\label{sec: alternate_illustrative_example}

In this section, we showcase an illustrative example similar to that in Section \ref{sec: illustrative_example}, albeit now we consider the multilayer perceptron proposed in Example 2 in \cite{xiang2018output}. The network uses a $\tanh(\cdot)$ activation function and has two inputs and two outputs, making the visualization of the input and output sets possible. See \cite{xiang2018output} for more details of the network. The noisy input is distributed uniformly on $\mathcal{X} = \{x\in\mathbb{R}^2 : |x_1 - 0.5|\le 1.5, ~ |x_2-0.5|\le 0.1\}$, where $\bar{x}=(0.5,0.5)$ is the nominal input. The safe set is $\mathcal{S}_1 = \{y\in\mathbb{R}^2 : a^\top y + b \ge 0\}$, where $a=(1,0)$ and $b = 3.7$.

\begin{comment}
% Commenting sentence below for space reasons in TCNS paper.
We now illustrate our proposed robustness certification procedure to localize the random output $Y=f(X)$ and assess its safety.
\end{comment}
The norm ball class $\mathcal{H}$ of Examples \ref{ex: norm_ball_class}, \ref{ex: norm_ball_functions_are_affine}, and \ref{ex: scenario_optimization_with_norm_ball_class} is employed with $\|\cdot\|$ being the $\ell_2$-norm.
\begin{comment}
% Commenting sentence below for space in TCNS paper.
As shown in Example \ref{ex: norm_ball_functions_are_affine}, $h(\cdot)$ is an affine set-valued function, and therefore $\Theta$ and $h(\cdot)$ satisfy the conditions of Theorem \ref{thm: convex_scenario_optimization}.
\end{comment}
The probability levels are chosen as $\epsilon = 0.1$ and $\delta = 10^{-5}$. We set $N=\left\lceil{\frac{2}{\epsilon}(\log\frac{1}{\delta}+p)}\right\rceil=291$, then uniformly sample $N$ inputs $x_j$ from $\mathcal{X}$ and compute their corresponding outputs $y_j$. As shown in Example \ref{ex: scenario_optimization_with_norm_ball_class}, the scenario problem takes the form given in \eqref{eq: scenario_optimization_with_norm_ball_class}. We choose the regularizer to be the square of the norm ball radius, i.e., $v(\bar{y},r)=r^2$. The optimization problem is convex as guaranteed by Theorem \ref{thm: convex_scenario_optimization}.

We solve the scenario problem first without regularization, and then with two different levels of regularization: $\lambda_1=0.001$ and $\lambda_2=1$. The respective solutions are denoted by $\theta^*$, $\theta^*_{\lambda_1}$, and $\theta^*_{\lambda_2}$. Each instance takes approximately $5$ seconds to solve using CVX in \textsc{Matlab} on a standard laptop with a $\SI{2.6}{\giga\hertz}$ dual-core i5 processor. The resulting approximate robustness levels are $\hat{r}(\theta^*) = 0.423$, $\hat{r}(\theta^*_{\lambda_1}) = 0.419$, and $\hat{r}(\theta^*_{\lambda_2}) = -1.107$. In the instances without regularization and with regularization level $\lambda_1$, Theorem \ref{thm: high-probability_guarantees} guarantees that the probabilistic robustness level $\bar{r}(0.1)$ is at least $0.4$ with probability at least $0.99999$. In other words, the random output $Y=f(X)$ has a safety level of $0.4$ with high probability, granting the probablistic robustness certificate we seek. On the other hand, since $\hat{r}(\theta_{\lambda_2}^*)<0$, the scenario problem using regularization level $\lambda_2$ is not able to certify the safety of the output. This is due to the inherent tradeoff between localization and certification, which we now discuss further.

The optimal $\epsilon$-covers $h(\theta^*)$, $h(\theta^*_{\lambda_1})$, and $h(\theta^*_{\lambda_2})$ are shown in Figure \ref{fig: xiang_net}. The unregularized set $h(\theta^*)$ is massively over-conservative due to the choice $\lambda=0$, which corresponds to pure robustness certification. Indeed, $h(\theta^*)$ is the $\epsilon$-cover from our class of sets that is furthest from the boundary of the safe set, making $\hat{r}(\theta^*)$ the tightest lower bound on $\bar{r}(\epsilon)$. On the other hand, the optimal $\epsilon$-covers using $\lambda=\lambda_1$ and $\lambda=\lambda_2$ are seen to give tighter localizations of the output $Y$. The approximate robustness level using regularization $\lambda_1$ is only slightly lower than the unregularized value, but the regularization $\lambda_2$ is large enough to cause the approximate robustness level $\hat{r}(\theta^*_{\lambda_2})$ to become negative at the expense of localization. This shows how overemphasizing localization may actually harm the certification aspect of robustness assessment, and empirically demonstrates why output set estimation methods may not be adequate for issuing robustness certificates.

\begin{figure}[ht]
	\centering
	\includegraphics[width=0.6\linewidth]{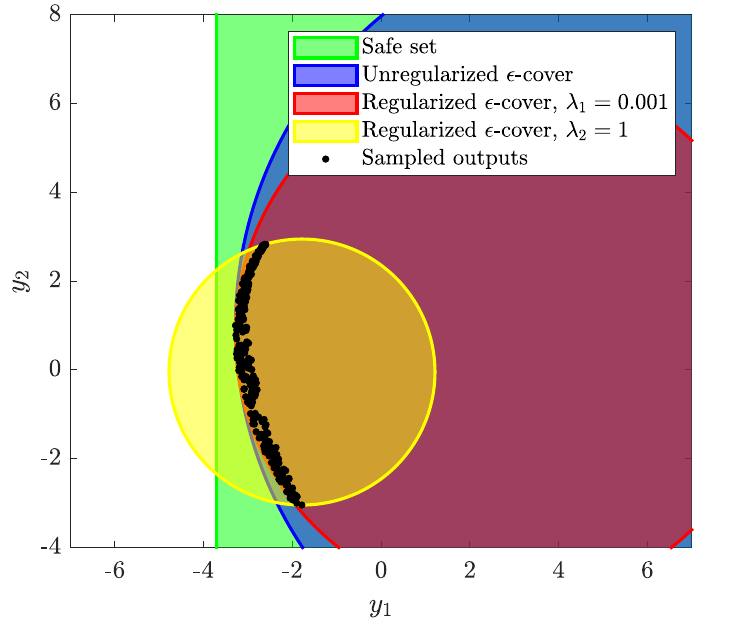}
	\caption{Optimal $\ell_2$-norm ball $\epsilon$-covers for safe set $\mathcal{S}_1$.}
	\label{fig: xiang_net}
\end{figure}

We now repeat the same experiment with a more complicated safe set. In particular, we add an additional constraint to the safe set to match Example 2 given in \cite{xiang2018output}, so that it now takes the form $\mathcal{S}_2 = \{y\in\mathbb{R}^2 : Ay + b \ge 0\}$, where $A = \left[\begin{smallmatrix}
	1 & 0 \\
	-1 & 0
\end{smallmatrix}\right]$ and $b = (3.7, -1.5)$. In this case, we apply our proposed method to each row of the safe set individually. To do so, we set $\epsilon'=\epsilon/2$ and $\delta'=\delta/2$, then define $N' = \lceil \frac{2}{\epsilon'}\left(\log\frac{1}{\delta'}+p\right)\rceil = 609$. For each of the two half-spaces defining the safe set, we solve the scenario problem using $N'$ independent and identically distributed input-output samples, and then we take the intersection of the two resulting $\epsilon'$-covers. Doing so, we obtain an $\epsilon$-cover of the output set with probability at least $1-\delta$. We repeat this process again using regularization levels $\lambda_1=0.001$ and $\lambda_2=1$, and we find that each scenario problem takes approximately $11$ seconds to solve. The resulting covers are shown in Figure \ref{fig: xiang_net_two_halfspaces}.

We find that the approximate robustness levels corresponding to $\lambda=0$ and $\lambda=\lambda_1$ are strictly positive for both half-spaces, certifying that the random output $Y$ is safe with the prescribed probability. However, for $\lambda=\lambda_2$, the optimal $\epsilon$-covers corresponding to both half-spaces are found to intersect the unsafe region of the output space, due to the increased emphasis on localization. Interestingly, the overall localization after intersecting the two $\epsilon'$-covers for $\lambda=\lambda_2$ is in a sense looser than that of the case $\lambda=\lambda_1$, indicating that moderate regularization levels, like $\lambda_1$ in this experiment, may simultaneously perform best for both localization and certification in the case of general polyhedral safe sets defined by more than one half-space. Optimizing $\lambda$ in general poses an interesting problem for future research.

\begin{figure}[ht]
	\centering
	\includegraphics[width=0.6\linewidth]{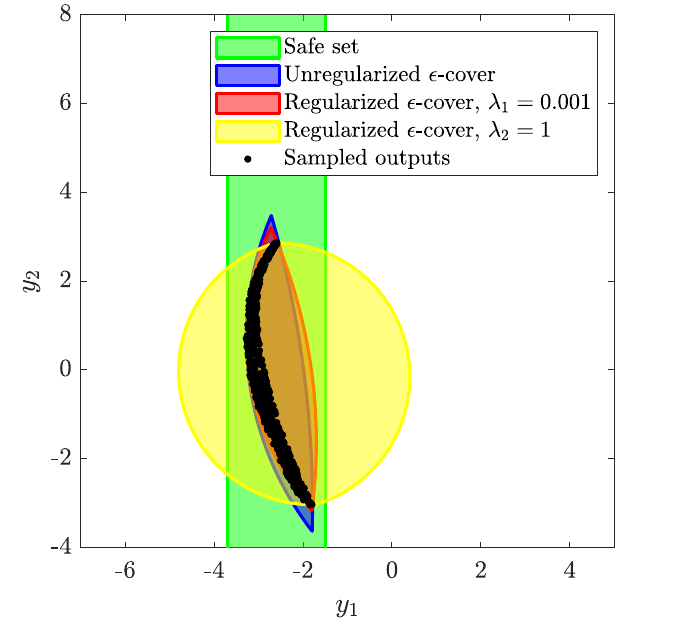}
	\caption{Optimal $\epsilon$-covers amongst intersections of two $\ell_2$-norm ball $\frac{\epsilon}{2}$-covers for safe set $\mathcal{S}_2$.}
	\label{fig: xiang_net_two_halfspaces}
\end{figure}

\section{Comparison to DeepPAC}
\label{sec: comparison_to_deeppac}

In this experiment, we use the half-space special case of our proposed robustness certification method, as presented in Section \ref{sec: special_case-class_of_half-spaces}. Recall that this method uses all optimization efforts to certify the robustness of the network; the outputs are not localized within the output space. For this example, we append a ReLU layer with normal random weights and biases to the neural network presented in Section \ref{sec: illustrative_example}, maintaining $n_y=2$, and we consider the resulting network as a classifier. Ten nominal inputs are chosen randomly at which we will perform robustness certification. The noisy input $X$ is distributed uniformly on the input set $\mathcal{X}=\{x\in\mathbb{R}^2 : \|x-\bar{x}\| \le \epsilon_x\}$, where the radius $\epsilon_x$ is varied from $0.1$ to $1$. We set the probabilistic confidence levels to be $\epsilon=0.1$ and $\delta=10^{-5}$.

For each nominal input and input set radius, we compute a lower bound on the probabilistic robustness level $\bar{r}(\epsilon)$ first using our proposed methodology, and then using the scenario-based approach presented in \cite{li2021probabilistic}, termed DeepPAC.\footnote{Like our robustness certificates, those given by DeepPAC are of the probably approximately correct form (see Remark \ref{rem: pac_learning}), hence the name DeepPAC.} Solving for such a lower bound using DeepPAC first entails solving a scenario linear program for an affine bound on the classifier's margin function, and then requires optimizing this bound over the input set. We remark that DeepPAC requires more samples (and therefore optimization constraints) than our approach, specifically, DeepPAC requires $N\ge \frac{2}{\epsilon}(\log\frac{1}{\delta}+(n_x+1)(n_y-1)+1)$, and therefore we restrict our comparison to DeepPAC to this moderately sized example for computational convenience. See Section \ref{sec: comparison_to_proven} for applications of our approach to large MNIST and CIFAR-10 networks.

After computing the lower bounds on $\bar{r}(\epsilon)$ using both methods, we average the values over the nominal inputs, independently for each input set radius. The results are shown in Figure \ref{fig: deeppac_comparison}. As seen, the lower bounds between the two methods remain close for small input set radii, but our approach offers a tighter lower bound as the radii increase. At input set radius $\epsilon_x=0.4$, our approach is able to issue a high-probability robustness certificate on average, whereas DeepPAC fails. These observations are explained as follows.

DeepPAC works by using samples to learn an affine approximation to the nonlinear margin function over $\mathcal{X}$, so that high-probability bounds on the margin function values at noisy inputs can be made using the learned affine function. Using affine functions to bound the margin function is a technique that naturally applies when considering worst-case or adversarial inputs (e.g., \cite{weng2018towards,zhang2018efficient}). However, as the input set becomes larger, affine approximations are no longer able to accurately capture the nonlinearities of the margin function. Consequently, DeepPAC's high-probability affine bounds on the margin function values become loose, resulting in a looser lower bound on the probabilistic robustness level. Our approach avoids this worst-case analysis technique by directly learning a set in the output space instead of learning a mapping from the input to the output space. This behavior is also seen in the experiment of Section \ref{sec: comparison_to_proven}.

\begin{figure}[ht]
	\centering
	\includegraphics[width=0.6\linewidth]{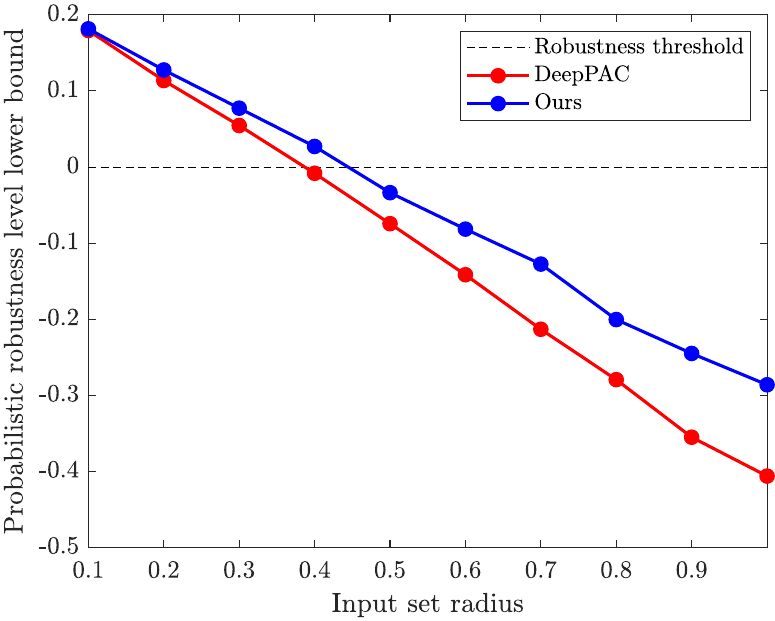}
	\caption{Our lower bound closely matches that of DeepPAC for small input set radii, but becomes noticeably tighter than DeepPAC as the input set becomes larger.}
	\label{fig: deeppac_comparison}
\end{figure}

\section{Comparison to Direct Bayesian Certification}

In this section, we repeat the experiment of Section \ref{sec: comparison_to_proven} using the sample-based certification method given in \cite{zakrzewski2004randomized}. Recall that \cite{zakrzewski2004randomized} imposes a Bayesian framework on the problem by directly assuming that the failure probability follows a uniform prior distribution. In doing so, \cite{zakrzewski2004randomized} is able to certify with probability $1-\delta$ that $\mathbb{P}_X(f(X)\in\mathcal{S}) \ge 1-\epsilon$, so long as the number of samples used is $N \ge \frac{1}{\epsilon}\log\frac{1}{\delta} - 1$ and $f(x_j) \in\mathcal{S}$ for all sampled inputs $x_j$, $j\in\{1,2,\dots,N\}$. In comparing our method to \cite{zakrzewski2004randomized}, two important remarks should be made. First, our method is much more general, as we are able to localize the outputs in arbitrary surrogate output sets, so long as they satisfy the assumptions of Theorem \ref{thm: convex_scenario_optimization}, yielding a convex scenario optimization problem, whereas the method of \cite{zakrzewski2004randomized} is only able to certify whether or not the outputs are contained in the safe half-space. Thus, to compare the methods, we restrict our method to the special case of half-space surrogate output sets from Appendix \ref{sec: special_case-class_of_half-spaces}. Second, for the same amount of samples ($O(\frac{1}{\epsilon}\log\frac{1}{\delta})$), our method provides more information regarding the robustness of the network than \cite{zakrzewski2004randomized} does. In particular, we lower bound the probabilistic robustness level $\bar{r}(\epsilon)$ with a \emph{continuous value}, whereas \cite{zakrzewski2004randomized} is only able to issue a \emph{binary} certificate asserting whether or not outputs are contained in the safe set with high probability. With this in mind, our method is the only one of the two that is able to quantify \emph{how} safe the outputs are by certifying a continuous-valued safety margin.

\begin{table*}[ht]
	\centering
	\caption{Average probabilistic robustness level lower bounds $\hat{r}(\theta^*)$ for MNIST ReLU networks subject to uniform noise over $\ell_\infty$-norm ball. All values are averaged over $10$ nominal inputs with randomly chosen target classes $i$. Also reported are the percentages of inputs that each method is able to certify. Lower bounds giving certified robustness (on average) are bolded, and the average certified adversarial radii computed using \cite{zhang2018efficient} are italicized.}
	\begin{subtable}{0.45\textwidth}
	\centering
	\caption{$2\times[20]$ network.}
	\resizebox{\textwidth}{!}{%
	\begin{tabular}{l r r r r r r}
	\toprule
	\multirow{2}{*}{Radius} & \multicolumn{2}{c}{$\epsilon=0.001$} & \multicolumn{2}{c}{$\epsilon=0.1$} & \multicolumn{2}{c}{$\epsilon=0.25$} \\
	& \cite{zakrzewski2004randomized} & Ours & \cite{zakrzewski2004randomized} & Ours & \cite{zakrzewski2004randomized} & Ours \\
	\midrule %
	\multirow{2}{*}{$0.01$} & $\bf 0.00$ & $\bf 14.11$ & $\bf 0.00$ & $\bf 14.26$ & $\bf 0.00$ & $\bf 14.28$ \\
	& $100$\% & $100$\% & $100$\% & $100$\% & $100$\% & $100$\% \\%
	\multirow{2}{*}{$\mathit{0.027}$} & $\bf 0.00$ & $\bf 13.36$ & $\bf 0.00$ & $\bf 13.77$ & $\bf 0.00$ & $\bf 13.85$ \\
	& $100$\% & $100$\% & $100$\% & $100$\% & $100$\% & $100$\% \\%
	\multirow{2}{*}{$0.05$} & $\bf 0.00$ & $\bf 12.34$ & $\bf 0.00$ & $\bf 13.11$ & $\bf 0.00$ & $\bf 13.26$ \\
	& $100$\% & $100$\% & $100$\% & $100$\% & $100$\% & $100$\% \\%
	\multirow{2}{*}{$0.1$} & $\bf 0.00$ & $\bf 10.25$ & $\bf 0.00$ & $\bf 11.66$ & $\bf 0.00$ & $\bf 12.00$ \\
	& $100$\% & $100$\% & $100$\% & $100$\% & $100$\% & $100$\% \\%
	\multirow{2}{*}{$0.5$} & N/A & $-7.21$ & N/A & $-0.05$ & N/A & $\bf 1.71$ \\
	& $20$\% & $20$\% & $40$\% & $40$\% & $50$\% & $50$\% \\%
	\bottomrule
	\end{tabular}%
	}
	\label{tab: mnist_relu_2x20-zak}
	\end{subtable}
	\hfil
	\begin{subtable}{0.45\textwidth}
	\centering
	\caption{$3\times[20]$ network.}
	\resizebox{\textwidth}{!}{%
	\begin{tabular}{l r r r r r r}
	\toprule
	\multirow{2}{*}{Radius} & \multicolumn{2}{c}{$\epsilon=0.001$} & \multicolumn{2}{c}{$\epsilon=0.1$} & \multicolumn{2}{c}{$\epsilon=0.25$} \\
	& \cite{zakrzewski2004randomized} & Ours & \cite{zakrzewski2004randomized} & Ours & \cite{zakrzewski2004randomized} & Ours \\
	\midrule %
	\multirow{2}{*}{$0.01$} & $\bf 0.00$ & $\bf 17.28$ & $\bf 0.00$ & $\bf 17.45$ & $\bf 0.00$ & $\bf 17.48$ \\
				& $100$\% & $100$\% & $100$\% & $100$\% & $100$\% & $100$\% \\%
	\multirow{2}{*}{$\mathit{0.022}$} & $\bf 0.00$ & $\bf 16.65$ & $\bf 0.00$ & $\bf 17.02$ & $\bf 0.00$ & $\bf 17.10$ \\
	& $100$\% & $100$\% & $100$\% & $100$\% & $100$\% & $100$\% \\%
	\multirow{2}{*}{$0.05$} & $\bf 0.00$ & $\bf 15.19$ & $\bf 0.00$ & $\bf 16.00$ & $\bf 0.00$ & $\bf 16.19$ \\
	& $100$\% & $100$\% & $100$\% & $100$\% & $100$\% & $100$\% \\%
	\multirow{2}{*}{$0.1$} & $\bf 0.00$ & $\bf 12.57$ & $\bf 0.00$ & $\bf 14.19$ & $\bf 0.00$ & $\bf 14.58$ \\
	& $100$\% & $100$\% & $100$\% & $100$\% & $100$\% & $100$\% \\%
	\multirow{2}{*}{$0.5$} & N/A & $-9.36$ & N/A & $-0.55$ & N/A & $\bf 0.36$ \\
	& $0$\% & $0$\% & $50$\% & $50$\% & $62.5$\% & $62.5$\% \\%
	\bottomrule
	\end{tabular}%
	}
	\label{tab: mnist_relu_3x20-zak}
	\end{subtable} \\
	\vspace*{\baselineskip}%
	\begin{subtable}{0.45\textwidth}
	\centering
	\caption{$2\times[1024]$ network.}
	\resizebox{\textwidth}{!}{%
	\begin{tabular}{l r r r r r r}
	\toprule
	\multirow{2}{*}{Radius} & \multicolumn{2}{c}{$\epsilon=0.001$} & \multicolumn{2}{c}{$\epsilon=0.1$} & \multicolumn{2}{c}{$\epsilon=0.25$} \\
	& \cite{zakrzewski2004randomized} & Ours & \cite{zakrzewski2004randomized} & Ours & \cite{zakrzewski2004randomized} & Ours \\
	\midrule %
	\multirow{2}{*}{$0.01$} & $\bf 0.00$ & $\bf 27.73$ & $\bf 0.00$ & $\bf 27.87$ & $\bf 0.00$ & $\bf 27.93$ \\
				& $100$\% & $100$\% & $100$\% & $100$\% & $100$\% & $100$\% \\%
	\multirow{2}{*}{$\mathit{0.032}$} & $\bf 0.00$ & $\bf 26.69$ & $\bf 0.00$ & $\bf 27.13$ & $\bf 0.00$ & $\bf 27.32$ \\
	& $100$\% & $100$\% & $100$\% & $100$\% & $100$\% & $100$\% \\%
	\multirow{2}{*}{$0.05$} & $\bf 0.00$ & $\bf 25.83$ & $\bf 0.00$ & $\bf 26.53$ & $\bf 0.00$ & $\bf 26.82$ \\
	& $100$\% & $100$\% & $100$\% & $100$\% & $100$\% & $100$\% \\%
	\multirow{2}{*}{$0.1$} & $\bf 0.00$ & $\bf 23.46$ & $\bf 0.00$ & $\bf 24.84$ & $\bf 0.00$ & $\bf 25.42$ \\
	& $100$\% & $100$\% & $100$\% & $100$\% & $100$\% & $100$\% \\%
	\multirow{2}{*}{$0.5$} & N/A & $\bf 4.83$ & $\bf 0.00$ & $\bf 11.79$ & $\bf 0.00$ & $\bf 14.45$ \\
	& $80$\% & $80$\% & $100$\% & $100$\% & $100$\% & $100$\% \\%
	\bottomrule
	\end{tabular}%
	}
	\label{tab: mnist_relu_2x1024-zak}
	\end{subtable}
	\hfil
	\begin{subtable}{0.45\textwidth}
	\centering
	\caption{$3\times[1024]$ network.}
	\resizebox{\textwidth}{!}{%
	\begin{tabular}{l r r r r r r}
	\toprule
	\multirow{2}{*}{Radius} & \multicolumn{2}{c}{$\epsilon=0.001$} & \multicolumn{2}{c}{$\epsilon=0.1$} & \multicolumn{2}{c}{$\epsilon=0.25$} \\
	& \cite{zakrzewski2004randomized} & Ours & \cite{zakrzewski2004randomized} & Ours & \cite{zakrzewski2004randomized} & Ours \\
	\midrule %
	\multirow{2}{*}{$0.01$} & $\bf 0.00$ & $\bf 36.86$ & $\bf 0.00$ & $\bf 37.06$ & $\bf 0.00$ & $\bf 37.12$ \\
				& $100$\% & $100$\% & $100$\% & $100$\% & $100$\% & $100$\% \\%
	\multirow{2}{*}{$\mathit{0.024}$} & $\bf 0.00$ & $\bf 35.97$ & $\bf 0.00$ & $\bf 36.44$ & $\bf 0.00$ & $\bf 36.58$ \\
	& $100$\% & $100$\% & $100$\% & $100$\% & $100$\% & $100$\% \\%
	\multirow{2}{*}{$0.05$} & $\bf 0.00$ & $\bf 34.32$ & $\bf 0.00$ & $\bf 35.32$ & $\bf 0.00$ & $\bf 35.59$ \\
	& $100$\% & $100$\% & $100$\% & $100$\% & $100$\% & $100$\% \\%
	\multirow{2}{*}{$0.1$} & $\bf 0.00$ & $\bf 31.10$ & $\bf 0.00$ & $\bf 33.10$ & $\bf 0.00$ & $\bf 33.69$ \\
	& $100$\% & $100$\% & $100$\% & $100$\% & $100$\% & $100$\% \\%
	\multirow{2}{*}{$0.5$} & N/A & $\bf 6.89$ & N/A & $\bf 15.85$ & $\bf 0.00$ & $\bf 18.56$ \\
	& $80$\% & $80$\% & $90$\% & $90$\% & $100$\% & $100$\% \\%
	\bottomrule
	\end{tabular}%
	}
	\label{tab: mnist_relu_3x1024-zak}
	\end{subtable}
	\label{tab: compare_to_zak}
\end{table*}

The results of the experimental comparison are given in Table \ref{tab: compare_to_zak}. It is observed that our method is always able to issue a robustness certificate whenever \cite{zakrzewski2004randomized} does, and furthermore, our lower bounds on the probabilistic robustness level are generally much less conservative than those granted by \cite{zakrzewski2004randomized} (which are binary; either a lower bound of $0$, or failure to issue a certificate altogether). This emphasizes that, for the same number of samples, a special case of our method is strictly more informative as \cite{zakrzewski2004randomized}, and always succeeds in issuing a robustness certificate whenever their method does.

% Acknowledgment

\section*{Acknowledgment}

This work was supported by grants from AFOSR, ONR, and NSF. The authors would like to thank Lily Weng for their insightful discussions on the implementation of PROVEN.

% Bibliography.
\bibliographystyle{IEEEtran}
\bibliography{references}

% Biographies.
\begin{comment}
\begin{IEEEbiography}[{\includegraphics[width=1in,height=1.25in,clip,keepaspectratio]{example_headshot.png}}]{Brendon G.\ Anderson} is\dots
\end{IEEEbiography}

\begin{IEEEbiography}[{\includegraphics[width=1in,height=1.25in,clip,keepaspectratio]{example_headshot.png}}]{Somayeh Sojoudi} is \dots
\end{IEEEbiography}
\end{comment}

\end{document}